\documentclass{article}

\usepackage[final]{corl_2026}
\makeatletter
\renewcommand{\@conferencelocation}{Austin, Texas, USA}
\makeatother
\usepackage[T1]{fontenc}

\usepackage{amsmath}
\usepackage{amssymb}
\usepackage{amsthm}
\usepackage{booktabs}
\usepackage{graphicx}
\usepackage{subcaption}
\usepackage{xcolor}
\usepackage{multirow}
\usepackage{wrapfig}

\theoremstyle{plain}
\newtheorem{proposition}{Proposition}

\newtheorem*{propositionrestate}{Proposition 1, restated}

\title{Phase-Conditioned Affordance Fields for $SE(3)$-Equivariant Visuomotor Policies}

\newcommand{\paperauthors}{Xinzhe Chen, Liqi Huang, Sihua Ren, Haowen Sun, Mingyang Li, Zeyang Liu, Xingyu Chen, Xuguang Lan}
\author{
  \textbf{Xinzhe Chen, Liqi Huang, Sihua Ren, Haowen Sun}\\
  \textbf{Mingyang Li, Zeyang Liu, Xingyu Chen\thanks{Corresponding authors: \href{mailto:xglan@mail.xjtu.edu.cn}{xglan@mail.xjtu.edu.cn} and \href{mailto:chenxingyu1990@xjtu.edu.cn}{chenxingyu1990@xjtu.edu.cn}}$^{\phantom{*}}$, Xuguang Lan\footnotemark[1]}\\
  National Key Laboratory of Human-Machine Hybrid Augmented Intelligence\\
  Institute of Artificial Intelligence and Robotics\\
  Xi'an Jiaotong University, Xi'an, China\\
  Project page: \url{https://paf-policy.github.io/}
}

\hypersetup{
  pdftitle={Phase-Conditioned Affordance Fields for SE(3)-Equivariant Visuomotor Policies},
  pdfauthor={\paperauthors},
  pdfkeywords={Robot Manipulation, SE(3) Equivariance, Visuomotor Policy}
}

\begin{document}
\maketitle
\setcounter{footnote}{0}
\vspace{-0.15in}

\begin{abstract}
$SE(3)$-equivariance provides a useful geometric inductive bias
for visuomotor policies,
encoding that actions covary with rigid transformations of the
observation.
However,
existing $SE(3)$-equivariant visuomotor policies build this
bias on a global equivariance assumption that treats the entire
observation as governed by a single rigid transformation,
which breaks down in real-world manipulation tasks where the
task-relevant region changes across different objects.
We address this by learning a phase-conditioned affordance
field (PAF) over the observed point cloud,
a per-point score that the $SE(3)$-equivariant policy uses in two complementary
ways.
Its score-weighted centroid serves as a learned,
phase-conditioned anchor that canonicalizes the observation and
the action around the active region,
replacing the fixed gripper-frame reference,
and the score itself is appended as a per-point feature.
This introduces a local geometric inductive bias,
while preserving the policy's global $SE(3)$ equivariance.
We validate on multiple MimicGen tasks under varying spatial
randomization,
and PAF consistently improves the success rate of the
underlying policy.
On multi-object real-robot tasks,
PAF outperforms baselines and keeps its lead under
lighting variations.

\end{abstract}

\keywords{Robot Manipulation, \texorpdfstring{$SE(3)$}{SE(3)} Equivariance, Visuomotor Policy}

\section{Introduction}
\label{sec_introduction}

Real-world robotic manipulation requires policies that
generalize across varying object poses and scene layouts.
Visuomotor policies~\citep{florence2022implicit, zhao2023learning}
such as Diffusion Policy~\citep{chi2025diffusion} capture diverse
action distributions from expert demonstrations,
but often degrade under such variations
when demonstrations are scarce.
Equivariant visuomotor policies introduce a geometric bias
for this setting~\citep{wang2025equivariant, tie2025etseed, huang2022equivarianttransporternetwork, Ryu_2024_CVPR, yang2025equibot}.
By encoding observations and decoding actions in a spherical
Fourier space,
Spherical Diffusion Policy (SDP)~\citep{zhu2025se} makes the
predicted action covary with rigid transformations of the
scene,
improving robustness to object pose variation.

However,
SDP relies on a global equivariance assumption that treats the
entire observation as governed by a single rigid
transformation.
This assumption breaks down in multi-object tasks where
objects move independently.
For example,
Figure~\ref{fig:overview} shows that hanging a cup on a rack
requires the policy to focus first on the cup for grasping,
then on the rack for transport,
and finally on the cup handle and rack pole for alignment.
A single scene-level transform cannot select which geometry
governs each phase.
Object-level $SE(3)$-equivariant formulations are also
difficult to apply broadly,
since per-object poses are often unavailable or unreliable
when estimated from raw observations,
and many phases involve inter-object interactions rather than
independent motion.

Our key insight is to address this problem along two
complementary axes,
temporal and spatial.
Along the temporal axis,
the robot's recent motion signals the current phase,
allowing us to decompose a multi-object task into a sequence of
phases.
Along the spatial axis,
each phase has its own active region whose geometry should
govern the action.
We represent this region as an affordance field~\citep{wu2021vat, ju2024robo, nasiriany2025rt, wang2022adaafford},
a per-point score over the observed point cloud that highlights
the task-relevant geometry.
Conditioning the equivariant policy on this field provides
a local geometric inductive bias while retaining the policy's
global $SE(3)$ equivariance.

\begin{figure}[t]
    \centering
    \includegraphics[width=\linewidth]{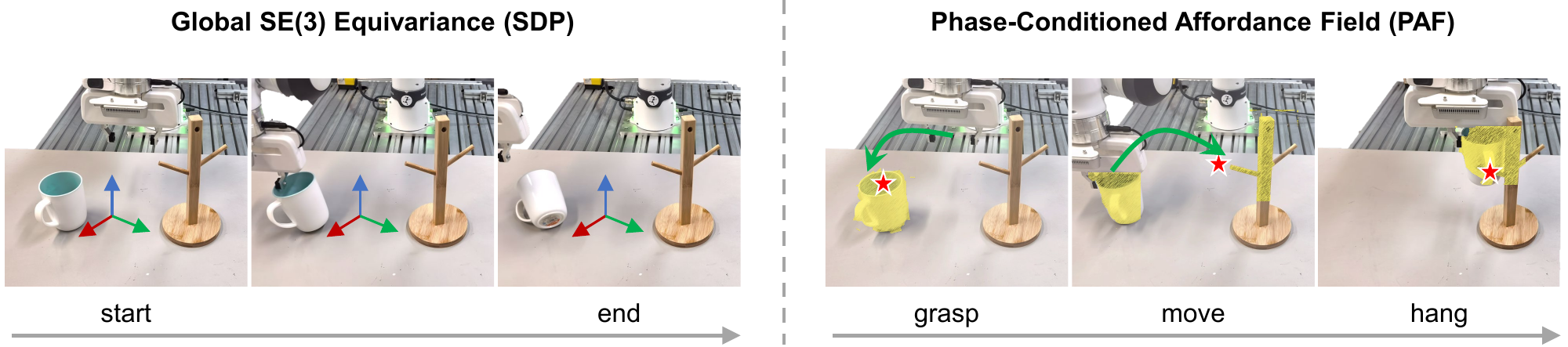}
    \caption{SDP versus PAF on the hang-cup task.
    Left, SDP uses a single global $SE(3)$ frame for the entire
    task.
    Right, PAF predicts affordance scores (yellow) and their
    centroid anchor (red star) across the grasp, move, and hang
    phases.}
    \label{fig:overview}
\end{figure}

We instantiate this insight as Phase-Conditioned Affordance
Fields (PAF).
PAF encodes the recent end-effector and gripper history into
a phase context,
then conditions an invariant message-passing
network~\citep{gilmer2017neural, satorras2021n} on this context
to score every observed point.
The score-weighted centroid acts as a soft anchor that
recenters the observation on the active region,
as illustrated on the right of Figure~\ref{fig:overview}.
The score itself is also appended to the point cloud as a
per-point feature.
We then feed this recentered, score-augmented observation to the
SDP backbone,
whose $SO(3)$-equivariant diffusion network produces actions in
the active-region frame.
Moreover, PAF requires no segmentation labels and is trained with
self-supervised pseudo-labels derived from expert demonstrations.

We evaluate PAF on MimicGen simulation
tasks~\citep{mandlekar2023mimicgen} and on six multi-object real-robot tasks.
Across twelve MimicGen settings,
PAF raises the average success rate from the published SDP
baseline's $48.8\%$ to $56.3\%$,
with the largest gain at the d3 randomization level,
where the average improves from $48.5\%$ to $60.3\%$.
On the real robot,
PAF averages $67.0\%$ success under normal lighting,
$9.0$ points above SDP,
and retains a lead of $10.6$ points under out-of-distribution
lighting.

\section{Related Work}
\label{sec:related_work}

\textbf{Diffusion-based Visuomotor Policies.}
Visuomotor policies are commonly learned from expert
demonstrations~\citep{florence2022implicit, zhao2023learning},
mapping observations to robot actions.
Diffusion Policy~\citep{chi2025diffusion} adopts a diffusion
model that captures multimodal action distributions.
Subsequent variants extend this paradigm with richer scene
representations~\citep{reuss2023goal, ke20253d, ze2025generalizable, liu2025rdt, gervet2023act3d}.
DP3~\citep{ze2024dp3} uses point-cloud observations with a
3D encoder,
exposing scene geometry directly to the diffusion policy and
improving spatial reasoning.
Other variants further inject affordance priors over
task-relevant geometry.
AnchorDP3~\citep{zhao2025anchordp33daffordanceguided} and
AffordDP~\citep{wu2025afforddp}, for instance, draw these
priors from external segmentation or foundation models.
These methods leave the observation-to-action transformation
largely implicit,
so geometric generalization depends on the pose coverage of the
demonstrations.

\textbf{Equivariant Visuomotor Policies.}
Equivariance~\citep{9812146, fuchs2020se, thomas2018tensorfieldnetworksrotation}
provides a geometric inductive bias for visuomotor policies by
encoding the rigid-transformation relationship between
observation and action.
Early work improves sample efficiency for grasping and
pick-and-place~\citep{huang2022equivarianttransporternetwork, zhu2022sampleefficientgrasplearning, 10160728}.
\mbox{Recent} methods combine equivariant networks with diffusion or
trajectory-level policy
learning~\citep{tie2025etseed, yang2025equibot}.
EquiDiff~\citep{wang2025equivariant} applies discrete
$SO(2)$-equivariance to a voxel diffusion policy end-to-end.
Spherical Diffusion Policy~\citep{zhu2025se} attains continuous
$SO(3)$-equivariance through spherical Fourier
features~\citep{s.2018spherical} and translation invariance
through gripper-frame canonicalization of observations and
actions,
together yielding an $SE(3)$-equivariant closed-loop policy.
These methods improve manipulation generalization but tie
action prediction to a fixed or global reference frame,
whereas the task-relevant geometry shifts across manipulation
phases.

\textbf{Local Frames for Visuomotor Policies.}
Other work attaches the policy to a local frame.
EDF~\citep{ryu2023edf} and Diffusion-EDFs~\citep{Ryu_2024_CVPR}
sample a target end-effector pose from an $SE(3)$-equivariant
energy or score model and hand it to a planner,
so they predict goal poses rather than closed-loop actions.
HEP~\citep{zhao2025hep} predicts a keypose translation with a
high-level agent and runs a low-level trajectory policy in the
resulting frame,
and Oriented Affordance Frames~\citep{rana2025oaf} obtain the
frame from external detection, segmentation, and pose estimation
models and compose independently trained sub-policies in it.
In both cases the frame is one predicted or detected pose,
and the policy is split into levels or sub-policies.
PAF builds on the diffusion backbone of SDP but derives the frame
origin from a phase-conditioned affordance field,
whose score-weighted centroid is the anchor and whose values
condition one end-to-end closed-loop policy,
so the frame origin tracks each phase's active region without hierarchy
or external perception while the policy retains global $SE(3)$
equivariance.

\section{Global $SE(3)$ Equivariance}
\label{sec:background_and_problem_formulation}

At time $t$,
the policy observes a state $S_t=(O_t,H_t)$ consisting of a
point cloud $O_t$ and a robot history $H_t$.
The point cloud $O_t=\{(x_i,f_i)\}_{i=1}^{N}$ contains $N$
points,
each with coordinate $x_i\in\mathbb{R}^{3}$ and non-geometric
features $f_i$ such as color.
The robot history
$H_t=\{(e_{\tau},r_{\tau},q_{\tau})\}_{\tau=t-L_h+1}^{t}$
collects end-effector position $e_{\tau}$,
rotation $r_{\tau}$,
and gripper state $q_{\tau}$ over the last $L_h$ steps.
The policy outputs an action trajectory
$A_t=\{a_{\tau}\}_{\tau=t}^{t+T_a-1}$,
where each action
$a_{\tau}=(e_{\tau}^{a},r_{\tau}^{a},q_{\tau}^{a})$ specifies a
translation,
a rotation,
and a gripper command.

A visuomotor policy $\pi_{\theta}$ is $SE(3)$-equivariant if,
for any rigid transformation $g=(R,u)\in SE(3)$,
applying $g$ to the observed state transforms the predicted
action trajectory by the same $g$,
\begin{equation}
    \pi_{\theta}(\cdot\mid g\cdot S_t)
    =
    g\cdot \pi_{\theta}(\cdot\mid S_t),
    \quad\forall\,g\in SE(3).
    \label{eq:policy_equivariance}
\end{equation}

SDP realizes this property in two steps.
Positions are first canonicalized around the current end-effector,
$\bar{x}_i=x_i-e_t$ and $\bar{e}^{a}_{\tau}=e^{a}_{\tau}-e_t$,
which removes the translation part of $g$,
and an $SO(3)$-equivariant backbone then handles the remaining
rotation,
since a global $g=(R,u)$ maps $\bar{x}_i$ to $R\bar{x}_i$.
This construction assumes that a single global $g$ acts on the
entire observed state.
However,
the global assumption fails when a scene contains multiple
objects that move independently,
since no single rigid transformation captures their combined
motion.
A canonicalization reference such as the gripper can absorb at
most one object's translation,
since a point on object $j$ translated by $u_j$ becomes
$\bar{x}_i+u_j-u_e$ when the reference moves by $u_e$,
leaving residual offsets for the remaining objects,
and a global $SO(3)$-equivariant architecture can only enforce
equivariance with respect to one shared scene rotation.
A formal derivation is given in
Appendix~\ref{app:multi_object_derivations}.

\section{Method}
\label{sec:method}

\subsection{Design Principle of PAF}
\label{sec:method_overview}

Localizing equivariance to the task-relevant object would, in
the strictest form, require per-object $SE(3)$ equivariance,
which is difficult to realize because objects interact and their
relative poses couple in the observation.
We instead model manipulation as a sequence of phases,
each focusing on an active region that may be a single object
or a multi-object interaction area.
Intuitively,
the predicted action should be governed by the geometry of the
active region,
while changes to phase-irrelevant geometry should have limited
effect.

Let $\zeta_t$ denote the phase context at time $t$ inferred
from the robot history $H_t$,
and let $\mathcal{R}_t\subseteq\{1,\ldots,N\}$ index the
observed points belonging to the active region in phase
$\zeta_t$.
For a rigid transformation $g_{\mathcal{R}_t}\in SE(3)$ that
acts only on the points indexed by $\mathcal{R}_t$,
we would like
\begin{equation}
    \pi_{\theta}(\cdot\mid g_{\mathcal{R}_t}\cdot S_t,\zeta_t)
    =
    g_{\mathcal{R}_t}\cdot \pi_{\theta}(\cdot\mid S_t,\zeta_t),
    \quad\forall\,g_{\mathcal{R}_t}\in SE(3).
    \label{eq:object_level_equivariance_desideratum}
\end{equation}
Eq.~\eqref{eq:object_level_equivariance_desideratum} localizes
Eq.~\eqref{eq:policy_equivariance} by replacing the global
$g\cdot S_t$ with the region-restricted
$g_{\mathcal{R}_t}\cdot S_t$.
A region-restricted $g_{\mathcal{R}_t}$ alters the geometric
relationships between the active region and the rest of the
scene,
so PAF's prediction does not transform cleanly under
$g_{\mathcal{R}_t}$.
We therefore treat
Eq.~\eqref{eq:object_level_equivariance_desideratum}
as a target that PAF approaches empirically through a learned,
phase-conditioned geometric signal,
rather than as a property it provably satisfies.

\begin{figure}[t]
    \centering
    \includegraphics[width=\linewidth]{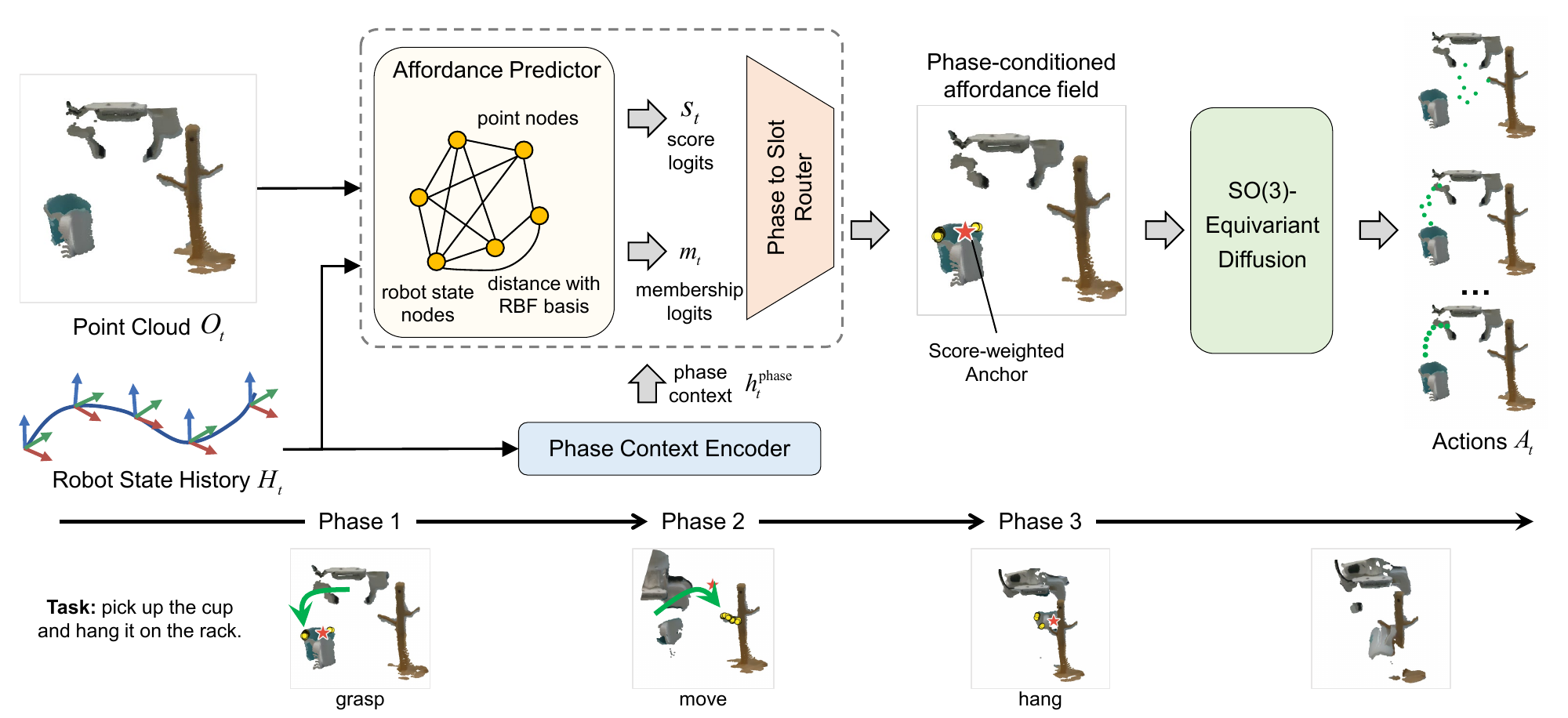}
    \caption{Overview of the PAF framework.}
    \label{fig:method_overview}
\end{figure}

PAF approaches this goal along two complementary axes,
temporal and spatial.
The phase context $h_t^{\mathrm{phase}}$ instantiates $\zeta_t$,
summarizing the current manipulation phase from the robot
history.
The affordance field $\hat{s}_t$ identifies $\mathcal{R}_t$,
highlighting the active-region geometry and defining a local
frame origin at the score-weighted anchor $c_t$.
Figure~\ref{fig:method_overview} illustrates the resulting
framework.
These two signals combine to give the PAF policy
\begin{equation}
    \pi_{\theta}^{\mathrm{PAF}}(\cdot\mid S_t, h_t^{\mathrm{phase}})
    =
    T_{c_t}\cdot \pi_{\phi}(\cdot\mid \tilde{O}_t,\tilde{H}_t),
    \label{eq:paf_policy_structure}
\end{equation}
where $\pi_{\phi}$ is the $SO(3)$-equivariant SDP backbone,
$\tilde{O}_t$ and $\tilde{H}_t$ are the point cloud and history
canonicalized by $c_t$,
with $\hat{s}_t$ appended to $\tilde{O}_t$ only,
and $T_{c_t}$ shifts action positions back by $c_t$.

\subsection{Phase Context Encoding}
\label{sec:method_affordance_field}

\textbf{Phase context vector.}
We form a phase context vector $\xi_t\in\mathbb{R}^{d_{\xi}}$
from six rigid-motion invariant statistics over the recent
robot history $H_t$.
From consecutive end-effector poses we compute the speed
$v_\tau=\|e_\tau-e_{\tau-1}\|$, the gripper change
$\Delta q_\tau=q_\tau-q_{\tau-1}$,
and the angular displacement $\omega_\tau$ between consecutive
orientations.
From triples of poses we compute the speed change
$\Delta v_\tau=v_\tau-v_{\tau-1}$
and the turning angle $\gamma_\tau$ between consecutive
displacement vectors.
Together with the gripper state $q_\tau$,
these statistics form $\xi_t$.
Full formulas are given in
Appendix~\ref{app:phase_feature_computation}.

\textbf{Phase context encoder.}
A learnable phase context encoder
$\Phi_{\mathrm{phase}}:\mathbb{R}^{d_{\xi}}\rightarrow
\mathbb{R}^{d_{\mathrm{phase}}}$
maps $\xi_t$ to the phase context
$h_t^{\mathrm{phase}}=\Phi_{\mathrm{phase}}(\xi_t)$,
which encodes phase-dependent motion patterns such as gripper
interactions and transitions between manipulation phases.
Because $\xi_t$ is invariant,
$h_t^{\mathrm{phase}}$ is also $SE(3)$ invariant,
and it conditions the affordance field predictor and the
phase-to-slot router.

\subsection{Phase-Conditioned Affordance Field}
\label{sec:method_anchor_prediction}

\textbf{Affordance field predictor.}
To determine which points belong to the active region,
each point must encode its local geometry and its relationship
to the end-effector under the current phase context.
We build a $k$-nearest-neighbor (KNN) graph over point
and recent end-effector nodes using Euclidean distance
$d_{ij}=\|p_i-p_j\|$,
where $p_i$ is the coordinate of node $i$,
and encode each distance with a radial basis function (RBF)
basis $\rho_{ij}$.
The phase context is injected into each node feature
by a broadcast linear projection,
and node features are updated over $L_g$ layers of invariant
message passing~\citep{gilmer2017neural, satorras2021n},
\begin{equation}
    h_i^{\ell+1}
    =
    h_i^{\ell}
    +
    \psi_h\!\left(h_i^{\ell},\,
        \frac{1}{|\mathcal{N}(i)|}
        \sum_{j\in\mathcal{N}(i)}
        \psi_e(h_i^{\ell},h_j^{\ell},\rho_{ij})\right),
    \label{eq:invariant_message_passing}
\end{equation}
where $h_i^{\ell}$ is the feature of node $i$ at layer $\ell$,
$\mathcal{N}(i)$ is its KNN neighbor set,
and $\psi_e$, $\psi_h$ are MLP edge and node update functions.
After the final layer,
two MLP heads decode each point-node feature $h_i^{L_g}$ into a
base score logit $s_i\in\mathbb{R}$ and slot membership logits
$m_i\in\mathbb{R}^{K}$.
Graph construction details are given in
Appendix~\ref{app:graph_construction}.

\textbf{Phase-to-slot router.}
Because the active region changes with the manipulation phase,
we introduce $K$ learned slots to model distinct regions and
combine the per-point logits $s_i$, $m_i$ with the phase
context $h_t^{\mathrm{phase}}$ into a phase-conditioned
affordance field.
The slot membership logits $m_i$ assign each point to slots
via $\beta_{i,k}=\mathrm{softmax}_k(m_{i,k})$,
and a phase-to-slot router $R_{\theta}$ selects the active slot
from the phase context,
$\alpha_t=\mathrm{softmax}(R_{\theta}(h_t^{\mathrm{phase}}))$.
The affordance score $\hat{s}_t(i)$ combines the base score
$b_i=\sigma(s_i)$, where $\sigma$ is the sigmoid function,
with the slot weighting,
$\hat{s}_t(i)=b_i\sum_{k=1}^{K}\alpha_{t,k}\beta_{i,k}$.
The anchor $c_t$ is the score-weighted centroid of the point
cloud,
\begin{equation}
    w_i
    =
    \mathrm{softmax}_i\!\left(
        \log(\hat{s}_t(i)+\epsilon)\right),
    \qquad
    c_t=\sum_{i=1}^{N}w_i x_i,
\end{equation}
where $x_i$ is the coordinate of point $i$,
$\epsilon$ is a small constant for numerical stability,
and $w_i$ is the normalized anchor weight.

\textbf{Affordance-enhanced action generation.}
The score $\hat{s}_t$ is appended as a per-point feature,
and the anchor $c_t$ recenters all positional inputs,
$\tilde{x}_i=x_i-c_t$,
$\tilde{e}_{\tau}=e_{\tau}-c_t$,
$\tilde{e}_{\tau}^{a}=e_{\tau}^{a}-c_t$,
while orientations,
gripper states,
and point features are unchanged.
The augmented cloud
$\tilde{O}_t=\{(\tilde{x}_i,f_i,2\hat{s}_t(i)-1)\}_{i=1}^{N}$
and canonicalized history
$\tilde{H}_t=\{(\tilde{e}_{\tau},r_{\tau},q_{\tau})\}_{\tau=t-T_o+1}^{t}$
over the last $T_o$ observation steps
are passed to the SDP backbone~\citep{zhu2025se},
whose $SO(3)$-equivariant denoising network samples
$\tilde{A}_t\sim\pi_{\phi}(\cdot\mid\tilde{O}_t,\tilde{H}_t)$.
At inference,
predicted action positions are shifted back by $c_t$
while orientation and gripper commands are kept as predicted.

\subsection{Training Objective}
\label{sec:method_training}

\textbf{Affordance score.}
We supervise the base score logit $s_i$ with pseudo-labels derived from
the future action position $y_t$ in the current phase,
rather than object annotations.
The pseudo-label assigns high values to points near the target
action position and is damped near the current end-effector.
The score loss $\mathcal{L}_{\mathrm{score}}$ is binary cross
entropy between $s_i$ and the pseudo-label.
The full pseudo-label formulation is given in
Appendix~\ref{app:slot_warmup},
with its scale parameters listed in Table~\ref{tab:appendix_hyperparameters}.

\textbf{Phase and slot supervision.}
A separate phase classifier head maps the phase context
$h_t^{\mathrm{phase}}$ to sub-phase and cycle predictions,
supervised by cross-entropy loss $\mathcal{L}_{\mathrm{phase}}$
against the sub-phase label $\varphi_t$ and cycle index
$\kappa_t$, both derived automatically from the gripper commands
and observed gripper states.
The phase-to-slot router output $\alpha_t$ is supervised by a
slot cross-entropy loss $\mathcal{L}_{\mathrm{slot}}$ against a
slot label that combines $\varphi_t$ with $\kappa_t$.
The per-point slot membership logits $m_i$ receive gradient only
through the DDPM~\citep{NEURIPS2020_4c5bcfec} denoising loss
$\mathcal{L}_{\mathrm{ddpm}}$.
Label definitions are given in Appendix~\ref{app:slot_warmup}.

\textbf{Overall objective.}
The full objective combines $\mathcal{L}_{\mathrm{ddpm}}$
with $\mathcal{L}_{\mathrm{score}}$,
$\mathcal{L}_{\mathrm{phase}}$,
and $\mathcal{L}_{\mathrm{slot}}$,
\begin{equation}
    \mathcal{L}
    =
    \mathcal{L}_{\mathrm{ddpm}}
    +
    \lambda_s\mathcal{L}_{\mathrm{score}}
    +
    \lambda_p\mathcal{L}_{\mathrm{phase}}
    +
    \lambda_z\mathcal{L}_{\mathrm{slot}},
    \label{eq:overall_training_objective}
\end{equation}
where $\lambda_s,\lambda_p,\lambda_z$ are scalar loss weights.

\subsection{Equivariance Analysis of PAF}
\label{sec:method_theoretical_analysis}

Consider a global rigid transformation $g=(R,u)\in SE(3)$
applied to all observed point and end-effector poses.
The affordance scores are invariant under $g$,
so the score-weighted anchor $c_t$ transforms with the scene.
The following proposition states this formally.

\begin{proposition}
Let $g=(R,u)\in SE(3)$ act on all observed point and end-effector
poses,
and let $\hat{s}_t^{\,\prime}(i)$ and $c_t^{\prime}$ denote the
affordance score and anchor computed from the transformed
observation $g\cdot S_t$.
Then $\hat{s}_t^{\,\prime}(i)=\hat{s}_t(i)$ (score invariance)
and $c_t^{\prime}=Rc_t+u$ (anchor equivariance).
\end{proposition}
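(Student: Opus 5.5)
The plan is to trace the computation of $\hat{s}_t$ forward through the pipeline, showing that every quantity feeding the score is invariant under $g$. The anchor claim then follows from linearity of the weighted centroid.

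\emph{Invariance of the phase context.} Under $g=(R,u)$ the end-effector positions map to $Re_\tau+u$, orientations to $Rr_\tau$, and gripper states are unchanged. The speeds $v_\tau=\|e_\tau-e_{\tau-1}\|$ are invariant because $u$ cancels in differences and $R$ preserves norms. Hence $\Delta v_\tau$ is invariant too. The turning angle $\gamma_\tau$ depends only on inner products of displacement vectors, and $R$ preserves inner products. The angular displacement $\omega_\tau$ depends on the relative rotation $r_{\tau-1}^{\top}r_\tau$, which is unchanged when both factors are left-multiplied by $R$. Finally $q_\tau$ and $\Delta q_\tau$ are untouched. So $\xi_t$ is invariant, and therefore so are $h_t^{\mathrm{phase}}=\Phi_{\mathrm{phase}}(\xi_t)$ and $\alpha_t=\mathrm{softmax}(R_\theta(h_t^{\mathrm{phase}}))$.

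\emph{Invariance of the graph and per-point logits.} All point and end-effector nodes transform by the same $g$, so every pairwise distance $d_{ij}=\|p_i-p_j\|$ is preserved. Consequently the KNN neighbor sets $\mathcal{N}(i)$ are identical, up to tie-breaking, which I will fix by a deterministic index-based rule. The RBF encodings $\rho_{ij}$ are likewise unchanged.

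For the initial node features, I will assume they are built from invariant quantities only: the non-geometric features $f_i$, the node type, and the broadcast projection of $h_t^{\mathrm{phase}}$, with no raw coordinates. The appendix graph construction has to be checked against this assumption, and I expect that check to be the main obstacle. If raw positions entered the node features, invariance would fail. Granting the assumption, induction on $\ell$ through Eq.~\eqref{eq:invariant_message_passing} shows that each $h_i^{\ell}$ is invariant, because every argument of $\psi_e$ and $\psi_h$ is invariant. It follows that $s_i$, $m_i$, $b_i=\sigma(s_i)$ and $\beta_{i,k}$ are invariant, and hence $\hat{s}_t^{\,\prime}(i)=b_i\sum_k\alpha_{t,k}\beta_{i,k}=\hat{s}_t(i)$.

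\emph{Anchor equivariance.} The weights $w_i$ are a deterministic function of the scores, so $w_i'=w_i$. Because softmax normalizes them, $\sum_i w_i=1$. Therefore
\[
c_t'=\sum_{i=1}^{N}w_i(Rx_i+u)=R\sum_{i=1}^{N}w_ix_i+u\sum_{i=1}^{N}w_i=Rc_t+u.
\]
The identity $\sum_i w_i=1$ is exactly what allows the translation $u$ to pass through the centroid unchanged.
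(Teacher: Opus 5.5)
Your proposal is correct and follows essentially the same route as the paper's proof. The paper's chain is the same as yours: pairwise distances are preserved, so the KNN graph, RBF edges and message-passing features are invariant; the six phase statistics are invariant, so $h_t^{\mathrm{phase}}$ and $\alpha_t$ are too; hence $b_i$, $\beta_{i,k}$ and $\hat{s}_t(i)$ are invariant; and $c_t'=Rc_t+u$ follows from $\sum_i w_i=1$. The node-initialization assumption you flagged as the main obstacle is exactly what the paper's graph-construction appendix specifies, $h_i^0=\mathrm{LN}(\tilde{h}_i^0+P_{\mathrm{phase}}h_t^{\mathrm{phase}})$ with $\tilde{h}_i^0$ built only from non-geometric attributes, so it is satisfied; your explicit KNN tie-breaking remark is extra care that the paper leaves implicit.
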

The score invariance follows because the phase features,
the KNN graph,
and the RBF edge encoding are invariant under a shared global
rigid transformation.
The anchor equivariance follows from $\sum_{i=1}^{N}w_i=1$.
A full proof is given in
Appendix~\ref{app:score_anchor_equivariance}.

For transformations that act only on the active region,
cross-region KNN edges change the graph structure,
so the anchor does not track them exactly.
PAF still provides two practical benefits.
First,
the learned score concentrates on the active region,
so the anchor is a stable reference point robust to
changes in phase-irrelevant geometry.
Second,
the score feature provides selective attention to task-relevant
geometry,
making the policy more sensitive to the active region
than to the rest of the scene.
We verify both properties through visualization and
quantitative analysis in Section~\ref{sec:ablations}.

\section{Experiments}
\label{sec:experiments}

\subsection{Simulation Experiments}
\label{sec:simulation_experiments}

\textbf{Simulation tasks.}
We evaluate on four multi-object manipulation tasks from
MimicGen~\citep{mandlekar2023mimicgen, mandlekar2021what},
built atop the Robosuite~\citep{zhu2025robosuitemodularsimulationframework} simulator:
Square,
Coffee,
Threading,
and Three-Piece Assembly,
each instantiated at the d2,
d3,
and d4 spatial randomization levels,
yielding twelve task-difficulty settings.
The d2 level applies $SE(2)$ randomization to the initial object
poses,
and d3 and d4 apply $SE(3)$ randomization,
with the randomization range increasing from d2 to d4.
Figure~\ref{fig:sim_tasks} shows the simulator
setup for each task and randomization level.
For each setting we use 100 demonstrations for training
and report the mean success rate over 50 trials and
3 seeds.

\begin{figure}[t]
    \centering
    \includegraphics[width=\linewidth]{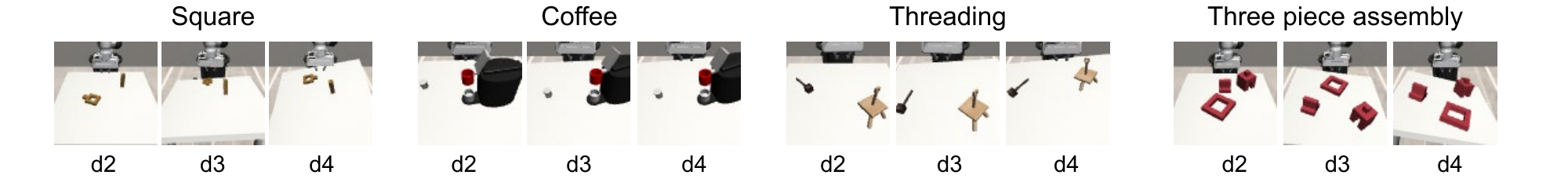}
    \caption{The four MimicGen tasks at the d2,
    d3,
    and d4 spatial randomization levels.}
    \label{fig:sim_tasks}
    \vspace{0.5em}
    \footnotesize
    \setlength{\tabcolsep}{4.3pt}
    \captionof{table}{Simulation success rate in percent on four
    MimicGen tasks.
    \emph{Equ.}~denotes the equivariance group of each policy,
    and DiffPo denotes Diffusion Policy.
    Bold marks the best entry in each column.
    Published SDP results are shown here. See
    Table~\ref{tab:sdp_reproduction} for reproduction and seed variability.}
    \label{tab:main_simulation}
    \begin{tabular}{l l ccc ccc ccc ccc cccc}
    \toprule
     & & \multicolumn{3}{c}{Coffee}
       & \multicolumn{3}{c}{Three Pc.}
       & \multicolumn{3}{c}{Square}
       & \multicolumn{3}{c}{Threading}
       & \multicolumn{4}{c}{Average} \\
    \cmidrule(lr){3-5}\cmidrule(lr){6-8}\cmidrule(lr){9-11}\cmidrule(lr){12-14}\cmidrule(lr){15-18}
    Method & Equ. &
    d2 & d3 & d4 &
    d2 & d3 & d4 &
    d2 & d3 & d4 &
    d2 & d3 & d4 &
    d2 & d3 & d4 & All \\
    \midrule
    SDP      & $SE(3)$   & 63 & 54 & 33 & 67 & 49 & 37 & 62 & 38 & 31 & 60 & 53 & 39 & 63.0 & 48.5 & 35.0 & 48.8 \\
    EquiDiff & $SO(2)$   & 65 & 43 & 29 & 37 & 15 & 8 & 39 & 3 & 3 & 39 & 20 & 10 & 45.0 & 20.3 & 12.5 & 25.9 \\
    DiffPo   & N/A       & 44 & 23 & 16 & 4 & 2 & 2 & 8 & 0 & 1 & 17 & 10 & 8 & 18.3 & 8.8 & 6.8 & 11.3 \\
    DP3      & N/A       & 34 & 20 & 0 & 0 & 0 & 0 & 7 & 0 & 13 & 12 & 4 & 1 & 13.3 & 6.0 & 3.5 & 7.6 \\
    \midrule
    PAF (Ours) & $SE(3)$   & \textbf{68} & \textbf{62} & \textbf{34} & \textbf{68} & \textbf{56} & \textbf{40} & \textbf{73} & \textbf{69} & \textbf{44} & \textbf{65} & \textbf{54} & \textbf{42} & \textbf{68.5} & \textbf{60.3} & \textbf{40.0} & \textbf{56.3} \\
    \bottomrule
    \end{tabular}
\end{figure}

\textbf{Baselines and implementation.}
We compare against
SDP~\citep{zhu2025se},
EquiDiff~\citep{wang2025equivariant},
DP3~\citep{ze2024dp3},
and Diffusion Policy~\citep{chi2025diffusion},
spanning $SE(3)$, $SO(2)$, and non-equivariant
architectures across point cloud, voxel, and RGB modalities.
Table~\ref{tab:main_simulation} uses published baseline
results~\citep{zhu2025se} except DP3 at d4, which is our implementation.
PAF uses the SDP authors' released demonstrations for d2 and
demonstrations regenerated with their official pipeline for d3 and d4.
We also reproduce SDP on PAF's exact demonstration sets under the
same evaluation protocol, sharing its $SO(3)$-equivariant diffusion
backbone to isolate the affordance-field contribution.
Appendix~\ref{app:sdp_reproduction} reports the reproduction and
standard deviations across seeds. Baseline and hyperparameter details
are in Appendices~\ref{app:baselines} and~\ref{app:hyperparameters}.

\textbf{Main results analysis.}
On the twelve task-difficulty settings reported in
Table~\ref{tab:main_simulation},
PAF raises the average success rate of SDP from
$48.8\%$ to $56.3\%$,
with the largest improvement at the intermediate d3 level,
where the average rises from $48.5\%$ to $60.3\%$.
This suggests that PAF is most useful when object poses vary in full
$SE(3)$ randomization but the task remains within a regime
where the underlying equivariant visuomotor policy can still
exploit the recovered geometric structure.
The other baselines drop substantially below SDP under d3 and d4
randomization,
confirming that SDP is the strongest existing comparison and the
gain we report is against a strong baseline.

\subsection{In-Depth Analysis}
\label{sec:ablations}

\begin{figure}[t]
    \centering
    \includegraphics[width=\linewidth]{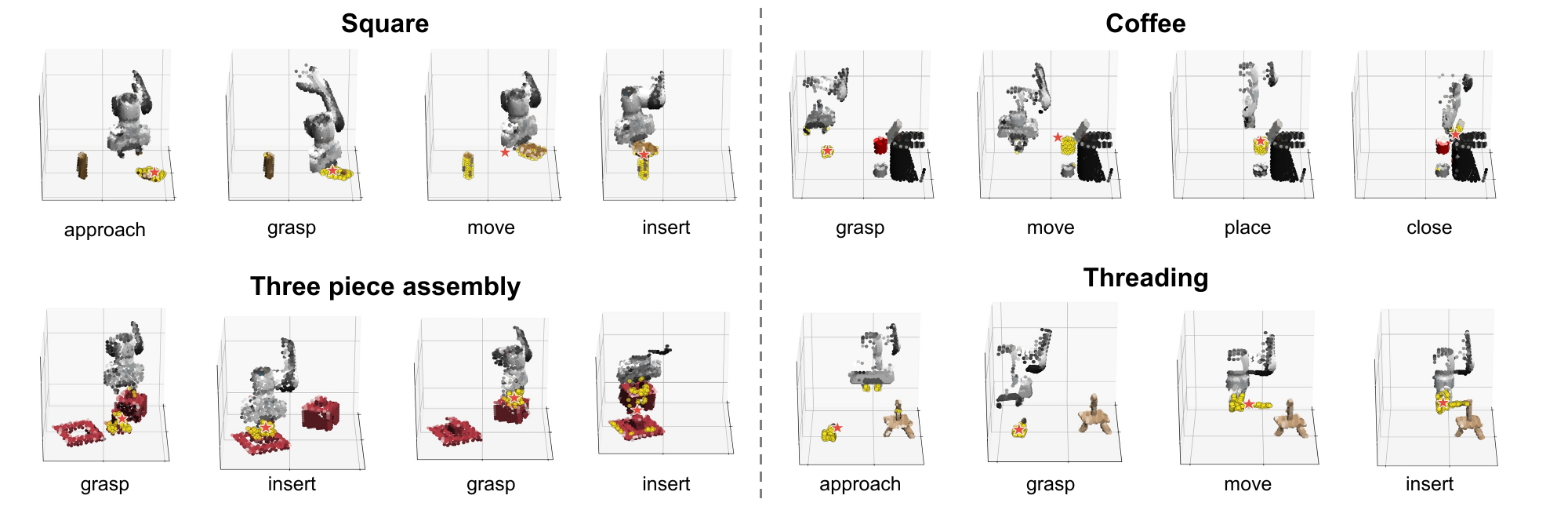}
    \caption{Affordance visualization on four MimicGen tasks.}
    \label{fig:sim_affordance_visualization}
    \vspace{0.5em}
    \begin{minipage}[t]{0.5\linewidth}
    \centering
    \scriptsize
    \setlength{\tabcolsep}{3.5pt}
    \captionof{table}{Component ablation on four task-difficulty
    settings.}
    \label{tab:ablation}
\begin{tabular}{l ccccc}
\toprule
Method     & Square & Coffee & Three Pc. & Square & Avg \\
           & d3     & d3     & d3        & d4     &     \\
\midrule
PAF (full) & \textbf{69} & \textbf{62} & \textbf{56} & \textbf{44} & \textbf{57.8} \\
\midrule
w/o anchor        & 43 & 46 & 40 & 33 & 40.5 \\
w/ keypose anchor & 35 & 51 & 48 & 32 & 41.5 \\
w/ hard scope     & 67 & 51 & 36 & 40 & 48.5 \\
w/o phase         & 60 & 55 & 52 & 38 & 51.3 \\
w/o routing       & 64 & 54 & 49 & 40 & 51.8 \\
w/o score         & 68 & 58 & 51 & 35 & 53.0 \\
\bottomrule
\end{tabular}
    \end{minipage}\hfill
    \begin{minipage}[t]{0.49\linewidth}
    \centering
    \scriptsize
    \setlength{\tabcolsep}{3pt}
    \renewcommand{\arraystretch}{1.39}
    \captionof{table}{Policy selectivity, anchor stability, and
    score change under local perturbations.}
    \label{tab:local_transform}
\begin{tabular}{l l cc c c}
\toprule
       & & \multicolumn{2}{c}{Selectivity ratio $\uparrow$}
       & Anchor & Score \\
\cmidrule(lr){3-4}
Perturb. & Task & PAF & w/o anchor
       & disp.\ (mm) & MAE \\
\midrule
\multirow{2}{*}{Trans.} & Square d3 & \textbf{1.41} & 1.29 & 0.05 & -- \\
                        & Coffee d3 & \textbf{1.57} & 1.31 & 0.00 & -- \\
\midrule
\multirow{2}{*}{Rot.}   & Square d3 & \textbf{1.22} & 1.02 & 0.03 & $2\times10^{-4}$ \\
                        & Coffee d3 & \textbf{1.35} & 1.00 & 0.00 & $2\times10^{-4}$ \\
\bottomrule
\end{tabular}
    \end{minipage}
\end{figure}

\textbf{Affordance visualization.}
Figure~\ref{fig:sim_affordance_visualization} highlights the
top-256 scored points in yellow and the anchor $c_t$ as a red
star.
The field concentrates on the object that governs the current
sub-phase rather than spreading over the whole scene.
On Square the score stays on the nut through approach and grasp,
then shifts to the peg for the move and insert sub-phases.
On Coffee it stays on the capsule at grasp,
hands off to the machine slot at move and place,
and to the lid at close,
tracking three distinct objects across four sub-phases.
Across all tasks the affordance field correctly identifies the
current manipulation phase from the phase context and predicts
the active region accordingly,
providing a local geometric inductive bias for the downstream
policy.

\textbf{Ablation studies.}
We ablate PAF on Square d3,
Coffee d3,
Three-Piece Assembly d3,
and Square d4.
Each row modifies PAF as described below.
\emph{w/o anchor} replaces $c_t$ with the gripper position $e_t$.
\emph{w/ keypose anchor} regresses $c_t$ directly to the next
keypose $y_t$ and removes the score channel,
slot routing,
and phase supervision.
\emph{w/ hard scope} keeps only the top-256 points by $\hat{s}_t$
instead of appending $\hat{s}_t$ as a feature.
\emph{w/o phase} removes the phase context encoder
$\Phi_{\mathrm{phase}}$ and with it the phase-to-slot router that
takes the phase context as input.
\emph{w/o routing} sets the slot router $\alpha_t$ to uniform.
\emph{w/o score} drops the score channel from $\tilde{O}_t$.

Table~\ref{tab:ablation} shows that the learned anchor is the
dominant component.
\emph{w/o anchor} loses $17.3$ points on average and falls below
SDP on Coffee~d3 and Three-Piece~d3,
and \emph{w/ keypose anchor} loses $16.3$ points,
so a directly regressed keypose anchor,
which also drops the score channel, routing, and phase supervision,
recovers none of the gain.
\emph{w/ hard scope} loses $9.3$ points,
concentrated on Coffee~d3 and Three-Piece~d3,
so the policy needs the full point cloud to track several objects
across phases.
\emph{w/o phase}, \emph{w/o routing}, and \emph{w/o score} lose
$6.5$, $6.0$, and $4.8$ points,
with routing mattering more on these two settings and the phase
context more on the Square settings.

\begin{figure}[t]
    \centering
    \includegraphics[width=\linewidth]{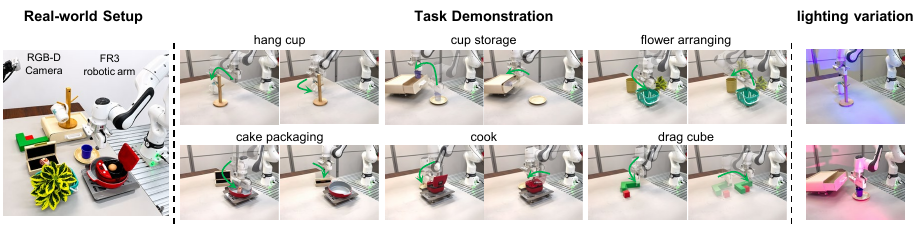}
    \caption{Real-robot setup and tasks.
    From left to right, a Franka Research 3 arm with an RGB-D
    camera, keyframes of the six tasks, and the lighting variation
    used to probe robustness.}
    \label{fig:real_setup}
    \vspace{0.5em}
    \footnotesize
    \setlength{\tabcolsep}{6pt}
    \captionof{table}{Real-robot success rate on six long-horizon
    tasks.
    N denotes normal lighting and L the out-of-distribution lighting
    variation.
    Bold marks the best entry in each column.}
    \label{tab:real_robot}
    \begin{tabular}{l cc cc cc cc cc cc cc}
    \toprule
       & \multicolumn{2}{c}{hang cup}
       & \multicolumn{2}{c}{cup storage}
       & \multicolumn{2}{c}{flower arr.}
       & \multicolumn{2}{c}{cake pkg.}
       & \multicolumn{2}{c}{cook}
       & \multicolumn{2}{c}{drag cube}
       & \multicolumn{2}{c}{Avg} \\
    \cmidrule(lr){2-3}\cmidrule(lr){4-5}\cmidrule(lr){6-7}\cmidrule(lr){8-9}\cmidrule(lr){10-11}\cmidrule(lr){12-13}\cmidrule(lr){14-15}
    Method     & N & L & N & L & N & L & N & L & N & L & N & L & N & L \\
    \midrule
    SDP        & 65 & 53 & 45 & 38 & 60 & 48 & 63 & 53 & 40 & 28 & 75 & 75 & 58.0 & 49.2 \\
    DP3        & 63 & 53 & 40 & 28 & 53 & 40 & 50 & 33 & 43 & 28 & 65 & 60 & 52.3 & 40.3 \\
    \midrule
    PAF (Ours) & \textbf{78} & \textbf{68} & \textbf{55} & \textbf{43} & \textbf{68} & \textbf{65} & \textbf{70} & \textbf{65} & \textbf{48} & \textbf{38} & \textbf{83} & \textbf{80} & \textbf{67.0} & \textbf{59.8} \\
    \bottomrule
    \end{tabular}
    \par\vspace{1.2em}
    \includegraphics[width=0.92\linewidth]{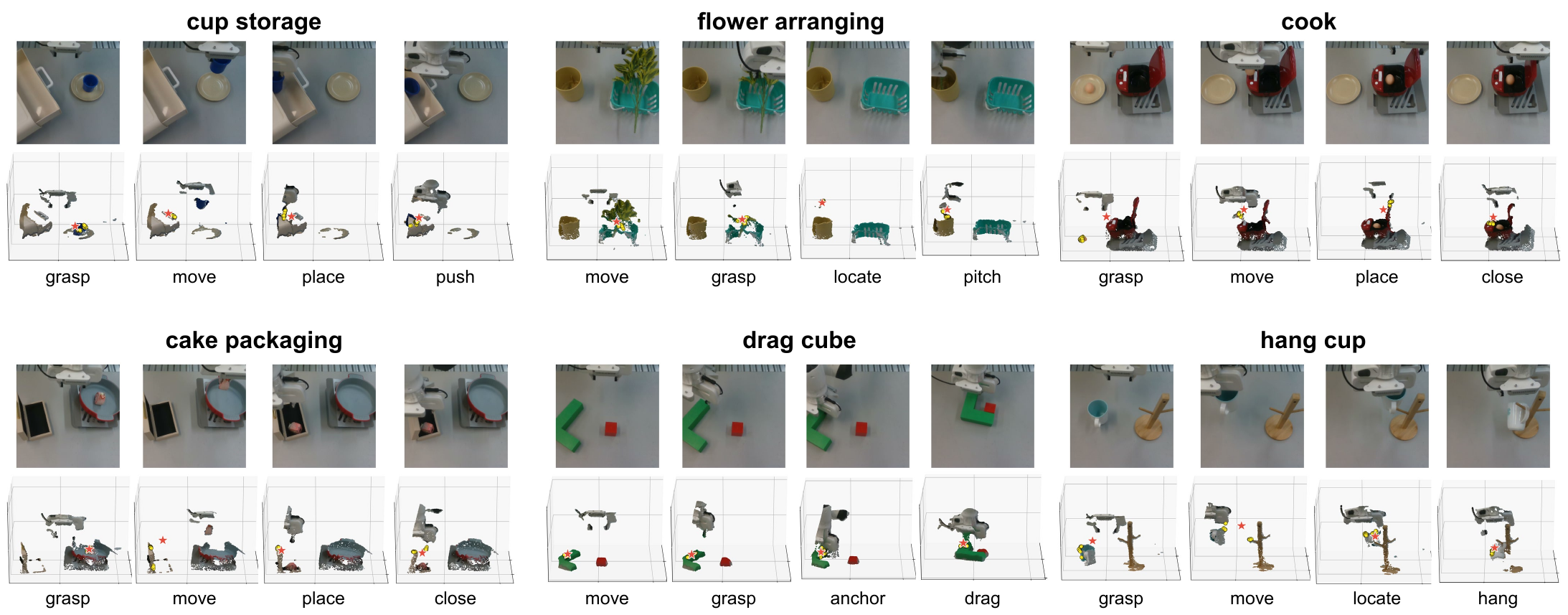}
    \par
    \captionof{figure}{Phase-conditioned affordance field on the six
    real-robot tasks.
    For each task the top row shows the RGB keyframe per sub-phase
    and the bottom row shows the point cloud with the affordance
    field,
    where the highlighted region marks the top-256 points by
    predicted score.}
    \label{fig:real_affordance}
\end{figure}

\textbf{Policy selectivity and anchor stability.}
To verify the two properties of
Section~\ref{sec:method_theoretical_analysis},
we measure how the policy responds to localized translations and
rotations of the point cloud.
The active subset is the top-256 points by anchor weight,
and the inactive subset is low-score points at least 10~cm from
the anchor.
Each subset is translated by 1/3/5~cm along $\pm x$/$\pm y$/$\pm z$
or rotated about the anchor by 5/10/15 degrees about
$\pm x$/$\pm y$/$\pm z$,
giving 18 perturbations of each kind.
We average the action change over them and the absolute change of
$\hat{s}_t$ on the rotated active points.

Table~\ref{tab:local_transform} shows that PAF responds more
selectively to the active region than the w/o-anchor variant.
The selectivity ratio,
the action change under active perturbations divided by that
under inactive ones,
is 1.41 vs 1.29 on Square and 1.57 vs 1.31 on Coffee under
translation,
and 1.22 vs 1.02 and 1.35 vs 1.00 under rotation,
where the w/o-anchor variant no longer separates the two regions.
Under inactive perturbations the anchor moves by at most 0.05~mm
against translations of 30~mm on average and by at most 0.03~mm
under rotation,
and under active rotations the affordance score changes by about
$2\times10^{-4}$,
so the anchor ignores phase-irrelevant geometry and the score is
stable under rotation of the region it attends to.
These measurements characterize selectivity and anchor stability
rather than exact local equivariance.

\subsection{Real-Robot Experiments}
\label{sec:real_robot_experiments}

\textbf{Real-world settings.}
We design six multi-object, long-horizon manipulation tasks,
with planar or $SE(3)$ object pose variation applied to four of
them across episodes.
Figure~\ref{fig:real_setup} shows the robot platform and task
examples.
Each task requires the active object to switch across two or
three phases of the manipulation.
Point clouds are obtained by back-projection of a single RGB-D
image without multi-view fusion.
Per task,
we train a single policy on 50 demonstrations collected under
normal lighting only,
and evaluate under both normal lighting (N) and
an out-of-distribution (OOD) lighting variation (L) that applies
colored illumination,
disturbing the RGB channel without changing the geometry.
For each condition we run two rollouts of 20 trials and report
their mean success rate.
Per-task descriptions,
pose ranges,
slot settings,
and rollout keyframes are in Appendix~\ref{app:real_setup_rollouts}.

\textbf{Real-robot results.}
Table~\ref{tab:real_robot} compares PAF against
SDP~\citep{zhu2025se} and DP3~\citep{ze2024dp3},
both of which share the point cloud input modality with PAF.
PAF wins every task under both lighting conditions.
Under normal lighting (N),
PAF averages $67.0\%$ success vs SDP $58.0\%$ and DP3 $52.3\%$.
Under the out-of-distribution lighting variation (L),
the averages become $59.8\%$, $49.2\%$, and $40.3\%$.
PAF thus remains above both baselines under OOD lighting,
and its lead over SDP on the 240 OOD-lighting trials is
significant at $p=0.02$ under a two-sided Fisher exact test.
Compared to SDP,
PAF rarely fails in early approach and grasp phases,
and its remaining failures concentrate on later contact-rich
interactions such as insertion and tool use.
A side-by-side failure comparison between PAF and SDP is
provided in Appendix~\ref{app:failure_cases},
and inference overhead is reported in
Appendix~\ref{app:inference_overhead}.

\textbf{Affordance field visualization.}
Figure~\ref{fig:real_affordance} shows that on the real robot the
field concentrates on the object that governs the current
sub-phase rather than spreading over the whole scene,
mirroring the behavior observed in simulation in
Figure~\ref{fig:sim_affordance_visualization}.
On cup storage the score focuses on the cup for \emph{grasp},
shifts to the drawer for \emph{move},
returns to the cup for \emph{place},
and moves to the drawer front for \emph{push}.
On cake packaging the score tracks the cake during
\emph{grasp},
shifts to the box for \emph{move},
and switches to the lid after \emph{place}.
On hang cup the score concentrates on the cup during
\emph{grasp} and shifts to both the cup and the rack during
\emph{move} and \emph{locate}.
The phase names are author annotations,
and Appendix~\ref{app:real_setup_rollouts} gives their mapping to
the gripper-derived sub-phase labels.

\section{Conclusion}
\label{sec:conclusion_and_limitations}

We presented PAF,
a phase-conditioned affordance field that augments an
$SE(3)$-equivariant visuomotor policy with a local geometric
inductive bias.
PAF scores every observed point by its relevance to the current
manipulation phase,
recenters the policy on the score-weighted centroid,
and feeds the scores back as a feature,
so the frame origin follows the active region while global
$SE(3)$ equivariance is preserved.
PAF raises the average success rate of SDP from $48.8\%$ to
$56.3\%$ on twelve MimicGen settings and leads SDP and DP3 on six
real-robot tasks under both lighting conditions,
and the simulation ablations attribute most of the gain to the
learned anchor.

\section{Limitations}
\label{sec:limitations}

PAF has three limitations that point to future work.
First,
it provides a learned inductive bias rather than strict local
$SE(3)$-equivariance,
since score invariance and anchor equivariance hold provably
only under a shared global transformation.
Second,
the affordance field is supervised by pseudo-labels derived from
future end-effector positions,
which may not capture the fine geometry of contact-rich steps.
Third,
the slot count is currently tied to the grasp-release structure
of a task.
We plan to pursue tighter guarantees on local
$SE(3)$-equivariance,
richer supervision signals for the affordance field,
and more flexible slot formulations that adapt to diverse task
structures.

\clearpage
\acknowledgments{
This work was supported in part by the Fundamental and Interdisciplinary Disciplines Breakthrough Plan of the Ministry of Education of China under Grant No.~JYB2025XDXM210, the NSFC under Grant Nos.~62125305, U23A20339, 62573339, and 62503380, the Natural Science Foundation of Shaanxi Province under Grant No.~\mbox{2025SYSSYSZD-083}, the State Grid Corporation of China Science and Technology Projects \mbox{52060025005B-439-ZN}, the Shaanxi Provincial Key R\&D Program under Grant No.~\mbox{2026JC-ZDGG-17}, and the CIE-Tencent Robotics X Rhino-Bird Focused Research Program.
}

\bibliography{references}

\clearpage
\appendix

\section{Additional Method Details}
\label{app:method_details}

\subsection{Multi-Object Equivariance Derivations}
\label{app:multi_object_derivations}

\textbf{Translation invariance under global canonicalization.}
Let $\bar{x}_i=x_i-e_t$ and $\bar{e}_{\tau}^{a}=e_{\tau}^{a}-e_t$
denote canonicalized point and action positions around the global
reference $e_t$,
typically the current end-effector position.
Under a global rigid transformation $g=(R,u)\in SE(3)$,
both the points and the reference translate together,
so the canonicalized coordinates transform purely by rotation,
\begin{equation}
    \bar{x}'_i
    =
    (Rx_i+u)-(Re_t+u)
    =
    R\bar{x}_i,
    \qquad
    \bar{e}'^{a}_{\tau}
    =
    R\bar{e}^{a}_{\tau},
    \label{eq:canonicalization_identity}
\end{equation}
which is then handled by $SO(2)$- or $SO(3)$-equivariant networks.

\textbf{Independent object translation residual under global canonicalization.}
The same canonicalization reference $e_t$ tracks the end-effector,
which need not move with all task objects.
With object-wise translations $u_j$ on the points of object $j$
and reference translation $u_e$ on $e_t$,
the canonicalized coordinate of point $i$ on object $j$ becomes
\begin{equation}
    \bar{x}_i^{j\prime}
    =
    (x_i^j+u_j)-(e_t+u_e)
    =
    \bar{x}_i^j+(u_j-u_e).
    \label{eq:translation_canonicalization_residual}
\end{equation}
A single global reference therefore leaves residual offsets
$u_j-u_e$ whenever objects translate independently of the
reference.

\textbf{Rotation equivariance mismatch.}
After translation is factored out,
a whole-scene equivariant network expects the canonicalized scene
to rotate by one shared rotation,
whereas object-wise scene variation gives independent rotations,
\begin{equation}
    \bar{O}^{\prime,\mathrm{global}}_t
    =
    \bigcup_{j=1}^{J} R\bar{O}_t^{j}
    =
    R\bar{O}_t,
    \qquad
    \bar{O}^{\prime,\mathrm{obj}}_t
    =
    \bigcup_{j=1}^{J} R_j\bar{O}_t^{j}
    \not\equiv
    R\bar{O}_t
    \quad\text{in general}.
    \label{eq:global_vs_objectwise_rotation}
\end{equation}
The first case admits one well-defined group action on the whole
scene.
The second is not generated by any single group element,
so a global rotation-equivariance law has no scene-level target
transformation.

Eq.~\eqref{eq:translation_canonicalization_residual} shows that
the fixed gripper reference leaves residual offsets
$u_j-u_e$ under independent object motion.
PAF instead learns a phase-conditioned reference,
the anchor $c_t$,
that recenters on whichever region governs the current phase.

\subsection{Definitions of Phase Context Vector}
\label{app:phase_feature_computation}

The phase context vector $\xi_t$ consists of six rigid-motion
invariant statistics computed from the recent robot state history
$H_t$,
namely end-effector speed $v_\tau$,
speed change $\Delta v_\tau$,
turning angle $\gamma_\tau$,
relative orientation angular displacement $\omega_\tau$,
gripper state $q_\tau$,
and gripper change $\Delta q_\tau$.
Their definitions follow.
For consecutive history steps $\tau=t-L_h+2,\ldots,t$,
we compute the end-effector displacement,
speed magnitude,
and gripper change as
\begin{equation}
    \Delta e_{\tau}=e_{\tau}-e_{\tau-1},
    \qquad
    v_{\tau}=\|\Delta e_{\tau}\|,
    \qquad
    \Delta q_{\tau}=q_{\tau}-q_{\tau-1}.
    \label{eq:appendix_motion_position_gripper}
\end{equation}
For triples of positions $\tau=t-L_h+3,\ldots,t$,
we compute the speed change
$\Delta v_{\tau}=v_{\tau}-v_{\tau-1}$
and the turning angle
\begin{equation}
    \gamma_{\tau}
    =
    \arccos
    \left(
    \operatorname{clip}\!\left(
    \frac{
        \Delta e_{\tau-1}^{\top}\Delta e_{\tau}
    }{
        \|\Delta e_{\tau-1}\|\|\Delta e_{\tau}\|+\epsilon
    },\,
    -1+\epsilon,\,
    1-\epsilon
    \right)
    \right),
    \label{eq:appendix_motion_turning_angle}
\end{equation}
where $\epsilon$ is a small numerical constant and
$\operatorname{clip}$ guards the $\arccos$ domain against numerical
overshoot and degenerate zero-displacement steps.
For orientation over $\tau=t-L_h+2,\ldots,t$,
in our implementation,
the orientation representation $r_{\tau}$ is converted to a unit
quaternion,
and we compute the relative quaternion
$\delta r_{\tau}=r_{\tau}^{-1}\otimes r_{\tau-1}$,
where $\otimes$ denotes quaternion multiplication.
Using $r_{\tau}$ here to denote the converted quaternion,
let $\delta r_{\tau,w}$ and $\delta r_{\tau,xyz}$ denote the scalar and
vector parts of $\delta r_{\tau}$.
The angular displacement is
\begin{equation}
    \omega_{\tau}
    =
    2\,\operatorname{atan2}
    \left(
        \|\delta r_{\tau,xyz}\|,
        |\delta r_{\tau,w}|
    \right),
    \label{eq:appendix_motion_angular_displacement}
\end{equation}
where $\operatorname{atan2}(y,x)$ is the two-argument arctangent and the
factor $2$ converts the half-angle represented by a unit quaternion into
a rotation angle.
Each statistic is invariant under shared global $SE(3)$ transforms
because it depends only on displacement norms,
angular differences,
or scalar gripper values.

\subsection{Graph Construction Details}
\label{app:graph_construction}

\textbf{KNN graph and RBF edges.}
Let $p_i$ denote the coordinate of a graph node,
either a point in $O_t$ or a recent end-effector in $H_t$.
The neighbor set $\mathcal{N}(i)$ is obtained by
$k$-nearest neighbors under
$d_{ij}=\|p_i-p_j\|$.
Each distance is encoded by an RBF basis,
\begin{equation}
    \rho_{ij,m}
    =
    \exp\!\left(
        -\left(\frac{d_{ij}-\mu_m}{\sigma_{\rho}}\right)^{2}\right),
    \quad m=1,\ldots,M,
    \label{eq:rbf_distance_encoding}
\end{equation}
where $M$ is the number of RBF kernels,
$d_{\max}$ is the maximum encoding distance,
the centers $\{\mu_m\}_{m=1}^{M}$ are uniformly spaced on
$[0,d_{\max}]$,
$\sigma_{\rho}=d_{\max}/(M-1)$,
and $\rho_{ij}=(\rho_{ij,1},\ldots,\rho_{ij,M})\in\mathbb{R}^{M}$
is the RBF edge feature.

\textbf{Node initialization.}
Let $\tilde{h}_i^0\in\mathbb{R}^{d_h}$ be the initial feature of
node $i$,
obtained from its non-geometric attributes
(e.g., color for point nodes, gripper state for end-effector nodes).
We inject the phase context by broadcast addition followed by
layer normalization,
\begin{equation}
    h_i^0
    =
    \mathrm{LN}\!\left(
        \tilde{h}_i^0+P_{\mathrm{phase}}h_t^{\mathrm{phase}}\right),
    \label{eq:node_init}
\end{equation}
where $\mathrm{LN}$ denotes layer normalization,
$P_{\mathrm{phase}}\in\mathbb{R}^{d_h\times d_{\mathrm{phase}}}$
projects the phase context to the node hidden dimension $d_h$,
and the same projected vector is added to all graph nodes.

\textbf{Score and slot heads.}
After the final message-passing layer,
two independent MLP heads decode each point-node feature
$h_i^{L_g}$,
\begin{equation}
    s_i=\phi_{\mathrm{score}}(h_i^{L_g})\in\mathbb{R},
    \qquad
    m_i=\phi_{\mathrm{slot}}(h_i^{L_g})\in\mathbb{R}^{K},
    \label{eq:score_slot_heads}
\end{equation}
where $L_g$ is the number of graph layers,
$\phi_{\mathrm{score}}$ predicts the base score logit,
and $\phi_{\mathrm{slot}}$ predicts the $K$ slot membership logits.

\subsection{Proof of Score Invariance and Anchor Equivariance}
\label{app:score_anchor_equivariance}

We restate and prove Proposition~1 from
Section~\ref{sec:method_theoretical_analysis} under a shared global
$SE(3)$ transformation.
It does not claim invariance to arbitrary object-wise
transformations,
which can change cross-object distances and the whole-scene
graph.

\begin{propositionrestate}
Let $g=(R,u)\in SE(3)$ act on all observed point and end-effector
poses,
and let $\hat{s}_t^{\,\prime}(i)$ and $c_t^{\prime}$ denote the
affordance score and anchor computed from the transformed
observation $g\cdot S_t$.
Then $\hat{s}_t^{\,\prime}(i)=\hat{s}_t(i)$ (score invariance)
and $c_t^{\prime}=Rc_t+u$ (anchor equivariance).
\end{propositionrestate}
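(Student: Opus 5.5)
The plan is to trace the computation of $\hat{s}_t$ forward through the pipeline and show that every quantity it depends on is unchanged under $g=(R,u)$. Then the anchor statement follows from a one-line affine computation.

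\textbf{Step 1: phase context.} I would start with the phase context. Under $g$, end-effector positions map to $Re_\tau+u$, so displacements become $R\,\Delta e_\tau$. Their norms $v_\tau$, the differences $\Delta v_\tau$, and the inner products $\Delta e_{\tau-1}^{\top}\Delta e_\tau$ (hence $\gamma_\tau$) are therefore preserved, because $R^{\top}R=I$. Orientations map to $R r_\tau$, so the relative rotation $r_\tau^{-1}r_{\tau-1}$ is unchanged and $\omega_\tau$ is preserved. Gripper quantities are untouched. Hence $\xi_t'=\xi_t$, and so $h_t^{\mathrm{phase}\prime}=h_t^{\mathrm{phase}}$ and $\alpha_t'=\alpha_t$.

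\textbf{Step 2: graph encoder.} All node coordinates $p_i$, both points and end-effectors, transform by the same $g$, so $d_{ij}'=\|R(p_i-p_j)\|=d_{ij}$. Consequently the KNN neighbor sets $\mathcal{N}(i)$ and the RBF features $\rho_{ij}$ are identical. The initial node features $h_i^0$ depend only on non-geometric attributes and on the invariant phase context, so they are unchanged. By induction on $\ell$ using Eq.~\eqref{eq:invariant_message_passing}, every $h_i^{\ell}$ is unchanged. It follows that $s_i$, $m_i$, $b_i$, $\beta_{i,k}$, and finally $\hat{s}_t(i)=b_i\sum_k\alpha_{t,k}\beta_{i,k}$ are invariant.

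\textbf{Step 3: anchor.} Since the scores are invariant, the weights $w_i$ are unchanged. Then
$c_t'=\sum_i w_i(Rx_i+u)=R\sum_i w_ix_i+u\sum_i w_i=Rc_t+u$, using $\sum_i w_i=1$ from the softmax.

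\textbf{Expected obstacle.} The one delicate point is KNN tie-breaking. If several neighbors are equidistant, the selected set could in principle depend on point ordering or on floating-point effects rather than on distances alone. I would handle this by assuming ties are broken by index, which is $g$-independent since $g$ preserves indices. Exact arithmetic is assumed throughout.
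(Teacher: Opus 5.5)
Your proposal is correct and follows the paper's proof essentially step for step. Distance invariance leaves the KNN graph and RBF encoding unchanged, and the invariant motion statistics leave the phase context unchanged; hence the node features, $b_i$, $\beta_{i,k}$, $\alpha_{t,k}$ and $\hat{s}_t(i)$ are unchanged, and $c_t^{\prime}=Rc_t+u$ follows from $\sum_i w_i=1$. Your explicit induction over message-passing layers and your index-based KNN tie-breaking convention are small refinements that the paper leaves implicit; the paper additionally derives global $SE(3)$-equivariance of the full policy, which goes beyond this statement.
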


\noindent\textbf{Proof.}
For any pair of graph nodes,
\begin{equation}
    d_{ij}^{\prime}
    =
    \|(Rp_i+u)-(Rp_j+u)\|
    =
    \|R(p_i-p_j)\|
    =
    \|p_i-p_j\|
    =
    d_{ij}.
    \label{eq:appendix_distance_invariance}
\end{equation}
Therefore the KNN graph and the RBF edge encoding are unchanged.
Since each message-passing layer uses only node features and the
RBF-encoded distances,
the point-node features after every layer are unchanged by the global
transformation.

The phase context is also invariant.
End-effector speed magnitudes,
speed changes,
turning angles,
relative quaternion angular displacements,
gripper states,
and gripper deltas are unchanged by applying the same rigid
transform to the whole scene and robot trajectory.
The phase context vector is therefore invariant,
\begin{equation}
    \xi_t(g\cdot H_t)=\xi_t(H_t),
    \label{eq:appendix_motion_invariance}
\end{equation}
and the phase context
$h_t^{\mathrm{phase}}=\Phi_{\mathrm{phase}}(\xi_t)$ is invariant as
well.
The score head,
slot head,
and phase-to-slot router are functions of invariant node features and
invariant phase context.
Hence
\begin{equation}
    b_i^{\prime}=b_i,
    \qquad
    \beta_{i,k}^{\prime}=\beta_{i,k},
    \qquad
    \alpha_{t,k}^{\prime}=\alpha_{t,k},
    \qquad
    \hat{s}_t^{\,\prime}(i)=\hat{s}_t(i).
    \label{eq:appendix_score_components_invariant}
\end{equation}

The anchor weights are functions only of the invariant scores,
so $w_i^{\prime}=w_i$.
The transformed anchor is therefore
\begin{equation}
    c_t^{\prime}
    =
    \sum_i w_i^{\prime}(Rx_i+u)
    =
    R\sum_i w_i x_i
    +
    u\sum_i w_i
    =
    Rc_t+u,
    \label{eq:appendix_anchor_proof}
\end{equation}
because $\sum_i w_i=1$.
This proves score invariance and anchor equivariance under a shared
global $SE(3)$ transformation.
Hence the PAF policy of Eq.~\eqref{eq:paf_policy_structure} is
globally $SE(3)$-equivariant.
Under $g$,
$\tilde{x}_i^{\prime}=R\tilde{x}_i$,
$\tilde{e}_{\tau}^{\prime}=R\tilde{e}_{\tau}$,
and $r_{\tau}^{\prime}=Rr_{\tau}$,
while the score channel,
point features,
and gripper states are unchanged,
so the $SO(3)$-equivariant backbone rotates its predicted actions
by $R$,
and shifting positions back by $c_t^{\prime}=Rc_t+u$ gives
Eq.~\eqref{eq:policy_equivariance}.

\subsection{Phase Label and Slot Routing}
\label{app:slot_warmup}

\textbf{Pseudo-label formulation.}
The target action position $y_t$ is the expert end-effector
position at the first gripper open-close transition within the
future action horizon.
When no transition occurs within the horizon,
$y_t$ falls back to the end-effector position at the horizon
end.
The pseudo-label and end-effector repulsion are
\begin{equation}
    \ell_i
    =
    \exp(-\|x_i-y_t\|/\tau_y),
    \quad
    \tilde{\ell}_i
    =
    \ell_i\!\left[
        1-(1-\eta)\exp(-\|x_i-e_t\|/\tau_{\mathrm{ee}})\right],
    \label{eq:action_pseudo_label}
\end{equation}
where $\tau_y$ and $\tau_{\mathrm{ee}}$ are spatial scales and
$\eta\in[0,1)$ is a minimum-scale floor,
so the pseudo-label is damped near the current end-effector
but never below a fraction $\eta$ of $\ell_i$.
The score loss is binary cross entropy on logits,
\begin{equation}
    \mathcal{L}_{\mathrm{score}}
    =
    \frac{1}{N}\sum_{i=1}^{N}\mathrm{BCE}(s_i,\tilde{\ell}_i).
    \label{eq:score_loss}
\end{equation}

\textbf{Phase and slot labels.}
The sub-phase label $\varphi_t\in\{0,1,2,3\}$ is derived from
the gripper state at the start and end of the future action
horizon.
$\varphi_t=0$ when the gripper remains open across the horizon,
$\varphi_t=1$ when it transitions from open to closed,
$\varphi_t=2$ when it remains closed,
and $\varphi_t=3$ when it transitions from closed to open.
For multi-cycle tasks,
$\kappa_t\in\{0,\ldots,\text{max\_cycles}-1\}$ is the number of
gripper closures observed in the episode before the current step,
clipped at $\text{max\_cycles}-1$.
A separate phase classifier head maps $h_t^{\mathrm{phase}}$ to
sub-phase and cycle predictions.
The phase loss $\mathcal{L}_{\mathrm{phase}}$ is the sum of
cross-entropy on the sub-phase prediction against $\varphi_t$
and cross-entropy on the cycle prediction against $\kappa_t$.
The slot label combines the cycle index with the open/closed
gripper state,
\begin{equation}
    z_t
    =
    2\kappa_t
    +
    \mathbf{1}[\varphi_t\ge 2],
    \qquad
    z_t\in\{0,\ldots,K-1\},
    \label{eq:appendix_slot_label}
\end{equation}
yielding an open-gripper and a closed-gripper slot per value
of $\kappa_t$ and $K=2\times\text{max\_cycles}$ slot classes.
The slot loss $\mathcal{L}_{\mathrm{slot}}$ is cross-entropy on
the phase-to-slot router output $\alpha_t$ against $z_t$.

\textbf{Gradient routing of slot membership.}
The per-point slot membership logits $m_i$ are not directly
supervised.
They receive gradient only through the anchor $c_t$ and the
score-as-feature input to the diffusion policy under
$\mathcal{L}_{\mathrm{ddpm}}$,
so $m_i$ is shaped by what the policy needs from
$\hat{s}_t$ and $c_t$ to predict the action,
not by a hand-specified per-point segmentation target.

\section{Simulation Experiment Details}
\label{app:sim_details}

\subsection{MimicGen Tasks}
\label{app:sim_tasks_visualization}

We evaluate on four multi-object MimicGen tasks.
\emph{Square} inserts a square nut onto a peg.
\emph{Coffee} places a capsule into a coffee machine and closes
the lid.
\emph{Threading} threads a needle through a target hole.
\emph{Three-Piece Assembly} sequentially assembles three parts,
involving two separate grasp-and-place cycles.
Each task is instantiated at three spatial randomization levels,
d2 with $SE(2)$ initial pose randomization,
and d3 and d4 with $SE(3)$ randomization at increasing range.
Figure~\ref{fig:sim_tasks} shows the simulator setup for these tasks
and randomization levels.

\subsection{Baselines}
\label{app:baselines}

We compare against four visuomotor policy baselines.
SDP~\citep{zhu2025se} is an $SE(3)$-equivariant point cloud
diffusion policy that achieves translational invariance via
gripper-frame canonicalization and rotational equivariance via
spherical Fourier features.
EquiDiff~\citep{wang2025equivariant} applies discrete
$SO(2)$-equivariance to a voxel diffusion policy in an
end-to-end manner.
Diffusion Policy~\citep{chi2025diffusion} is a 2D RGB diffusion
policy without equivariance.
DP3~\citep{ze2024dp3} is a point cloud diffusion policy without
equivariance.
We omit three additional baselines that are not directly
comparable under our protocol.
EquiBot~\citep{yang2025equibot} requires per-task object
segmentation,
which does not generalize to multi-object tasks where the active
object changes across phases.
AnchorDP3~\citep{zhao2025anchordp33daffordanceguided} and AffordDP~\citep{wu2025afforddp} are evaluated under
different settings (dual-arm sparse-keypose prediction and
cross-category zero-shot transfer respectively) and depend on
segmentation masks or foundation-model-estimated affordances,
which are unavailable in our self-supervised, single-arm
protocol.
As an in-setting substitute for these affordance-augmented
diffusion baselines,
the \emph{w/o phase} ablation in Table~\ref{tab:ablation}
removes phase conditioning and reduces the affordance predictor
to a single phase-agnostic score,
approximating the phase-agnostic affordance prior of those methods
while keeping the
data, backbone, and evaluation protocol identical to PAF.

\subsection{SDP Reproduction and Seed Variability}
\label{app:sdp_reproduction}

To check the effect of demonstration provenance, we reproduce SDP
on the exact demonstration sets used to train PAF.
The d2 sets are the SDP authors' released files, whereas the
unreleased d3 and d4 sets are regenerated using their official pipeline.
Table~\ref{tab:sdp_reproduction} reports the mean success rate and
standard deviation across three seeds for PAF and the reproduced SDP,
with 50 evaluation trials per seed and setting.
The published SDP results are included separately for reference.

PAF outperforms reproduced SDP across all twelve settings in
Table~\ref{tab:sdp_reproduction}.
Averaging the displayed per-setting means gives approximately
$56.3\%$ for PAF and $47.6\%$ for reproduced SDP.
The published SDP average is $48.8\%$.
All twelve reproduced SDP results differ from the published values
by at most 3 percentage points.
For Square d3, reproduced SDP achieves $39\%$, compared with the
published result of $38\%$.
These results support the performance advantage of PAF over SDP
when the training demonstrations are held fixed.

\begin{table}[htbp]
\centering
\small
\setlength{\tabcolsep}{6pt}
\caption{Success rate in percent on identical training
demonstrations for PAF and reproduced SDP.
Higher is better.
Entries are mean $\pm$ standard deviation across three seeds.
Published SDP entries are the means retained in
Table~\ref{tab:main_simulation}.}
\label{tab:sdp_reproduction}
\begin{tabular}{l c c c}
\toprule
Setting & PAF & SDP (reproduced) & SDP (published) \\
\midrule
Coffee d2 & $68\pm0.0$ & $62\pm2.0$ & $63$ \\
Coffee d3 & $62\pm2.0$ & $53\pm2.3$ & $54$ \\
Coffee d4 & $34\pm0.0$ & $31\pm2.3$ & $33$ \\
\midrule
Three-Piece d2 & $68\pm4.0$ & $64\pm3.5$ & $67$ \\
Three-Piece d3 & $56\pm0.0$ & $48\pm2.0$ & $49$ \\
Three-Piece d4 & $40\pm2.0$ & $37\pm2.3$ & $37$ \\
\midrule
Square d2 & $73\pm2.3$ & $61\pm3.1$ & $62$ \\
Square d3 & $69\pm2.3$ & $39\pm3.1$ & $38$ \\
Square d4 & $44\pm5.3$ & $29\pm1.2$ & $31$ \\
\midrule
Threading d2 & $65\pm1.2$ & $58\pm2.0$ & $60$ \\
Threading d3 & $54\pm0.0$ & $51\pm1.2$ & $53$ \\
Threading d4 & $42\pm2.0$ & $38\pm0.0$ & $39$ \\
\bottomrule
\end{tabular}
\end{table}

\subsection{Implementation Details}
\label{app:hyperparameters}

Table~\ref{tab:appendix_hyperparameters} lists the
hyperparameters used in the main simulation results.
The slot count $K$ is task-dependent and equals
$2\times\text{max\_cycles}$,
where $\text{max\_cycles}\!=\!1$ for the single-cycle tasks
Square,
Coffee,
and Threading,
and $\text{max\_cycles}\!=\!2$ for the two-cycle task Three-Piece
Assembly.
The diffusion policy backbone matches Spherical Diffusion
Policy~\citep{zhu2025se},
which is also the backbone of the reproduced SDP baseline in
Table~\ref{tab:sdp_reproduction},
so this same-data comparison isolates the contribution
of the phase-conditioned affordance field.
We optimize with AdamW~\citep{loshchilov2018decoupled}
and apply EMA weight averaging following the diffusion-policy
backbone.
The diffusion policy backbone parameters listed below are
inherited from SDP and included for
reproducibility.

\begin{table}[h]
\centering
\small
\setlength{\tabcolsep}{6pt}
\caption{Hyperparameters used in the main simulation results.
Values correspond to the PAF configuration used in
Table~\ref{tab:main_simulation}.}
\label{tab:appendix_hyperparameters}
\begin{tabular}{l l}
\toprule
Parameter & Value \\
\midrule
\multicolumn{2}{l}{\emph{Loss weights}} \\
Score loss weight $\lambda_s$ & $1.0$ \\
Phase loss weight $\lambda_p$ & $0.2$ \\
Slot loss weight $\lambda_z$ & $0.1$ \\
\midrule
\multicolumn{2}{l}{\emph{Pseudo-label}} \\
Pseudo-label scale $\tau_y$ & $0.03$ \\
End-effector repulsion scale $\tau_{\mathrm{ee}}$ & $0.12$ \\
Repulsion floor $\eta$ & $0.5$ \\
\midrule
\multicolumn{2}{l}{\emph{Slot routing}} \\
Total slot count $K$ & $2$ for single-cycle tasks, $4$ for Three-Piece Assembly \\
\midrule
\multicolumn{2}{l}{\emph{Affordance graph}} \\
KNN neighbors $k_{\mathrm{nn}}$ & $24$ \\
Graph layers $L_g$ & $2$ \\
Hidden dim $d_h$ & $64$ \\
RBF basis count $M$ & $16$ \\
RBF range $d_{\max}$ on normalized coordinates & $0.125$ \\
Motion history $L_h$ & $16$ \\
Phase context dim $d_{\mathrm{phase}}$ & $128$ \\
\midrule
\multicolumn{2}{l}{\emph{Diffusion policy backbone}} \\
Encoder & 5-layer $SO(3)$-equivariant EquiformerV2 \\
Sphere channels & $\{16, 32, 64, 128, 256\}$ \\
Maximum degree $\ell_{\max}=m_{\max}$ & $2$ \\
Action horizon $T_a$ & $16$ \\
Observation steps $T_o$ & $2$ \\
Action steps & $8$ \\
Noise schedule & DDPM, $100$ timesteps \\
Inference steps & $100$ (simulation, DDPM~\citep{NEURIPS2020_4c5bcfec}), $20$ (real robot, DDIM~\citep{song2021denoising}) \\
Score-as-feature channel & enabled \\
\midrule
\multicolumn{2}{l}{\emph{Optimization}} \\
Optimizer & AdamW \\
Learning rate & $1\times 10^{-4}$ \\
Betas & $(0.95, 0.999)$ \\
Weight decay & $1\times 10^{-6}$ \\
LR schedule & cosine, $500$ warmup steps \\
Batch size & $32$ \\
Epochs at $n_{\mathrm{demo}}=100$ & $500$ \\
EMA power, max & $0.75$, $0.9999$ \\
Hardware & $8\times$ RTX 3090, one training run per GPU \\
\bottomrule
\end{tabular}
\end{table}

\section{Real-Robot Experiment Details}
\label{app:real_robot_task_details}

\subsection{Setup and Rollouts}
\label{app:real_setup_rollouts}

\textbf{Platform.}
The robot is a Franka Research 3 arm with a parallel-jaw gripper.
Scene observation comes from a single RealSense D435 RGB-D camera
mounted in a third-person view of the workspace,
and point clouds are obtained by direct back-projection of the
depth image without multi-view fusion.

\begin{figure}[!t]
    \centering
    \includegraphics[width=0.85\linewidth]{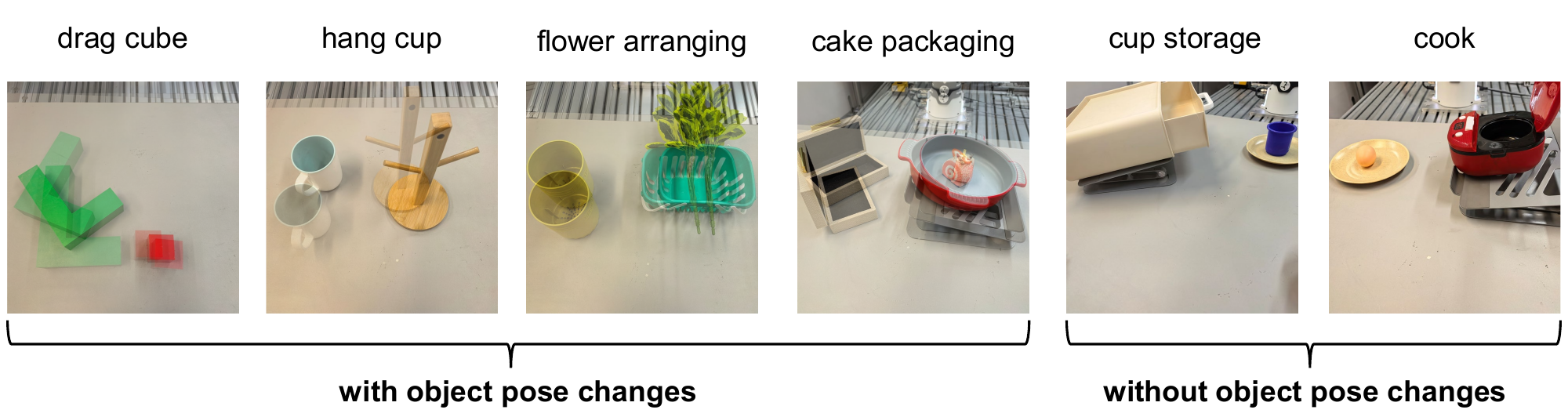}
    \caption{Real-robot task settings.
    The first four tasks involve object pose changes across
    episodes, and the last two have fixed object layouts.}
    \label{fig:task_settings}
    \vspace{0.8em}
    \centering
    \begin{subfigure}[t]{0.46\linewidth}
        \centering
        \includegraphics[width=\linewidth]{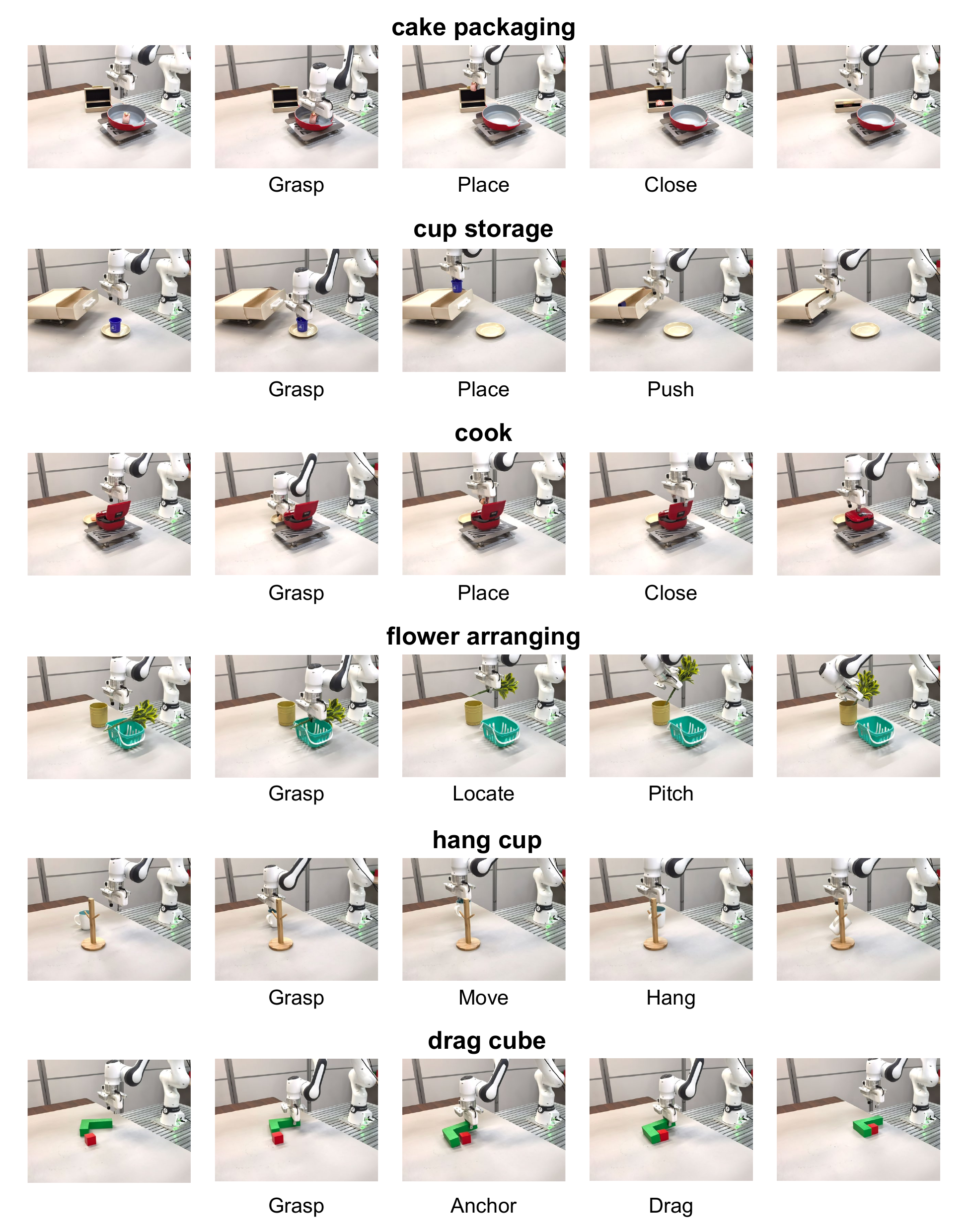}
        \caption{Normal lighting.}
        \label{fig:real_rollouts_normal}
    \end{subfigure}\hfill
    \begin{subfigure}[t]{0.46\linewidth}
        \centering
        \includegraphics[width=\linewidth]{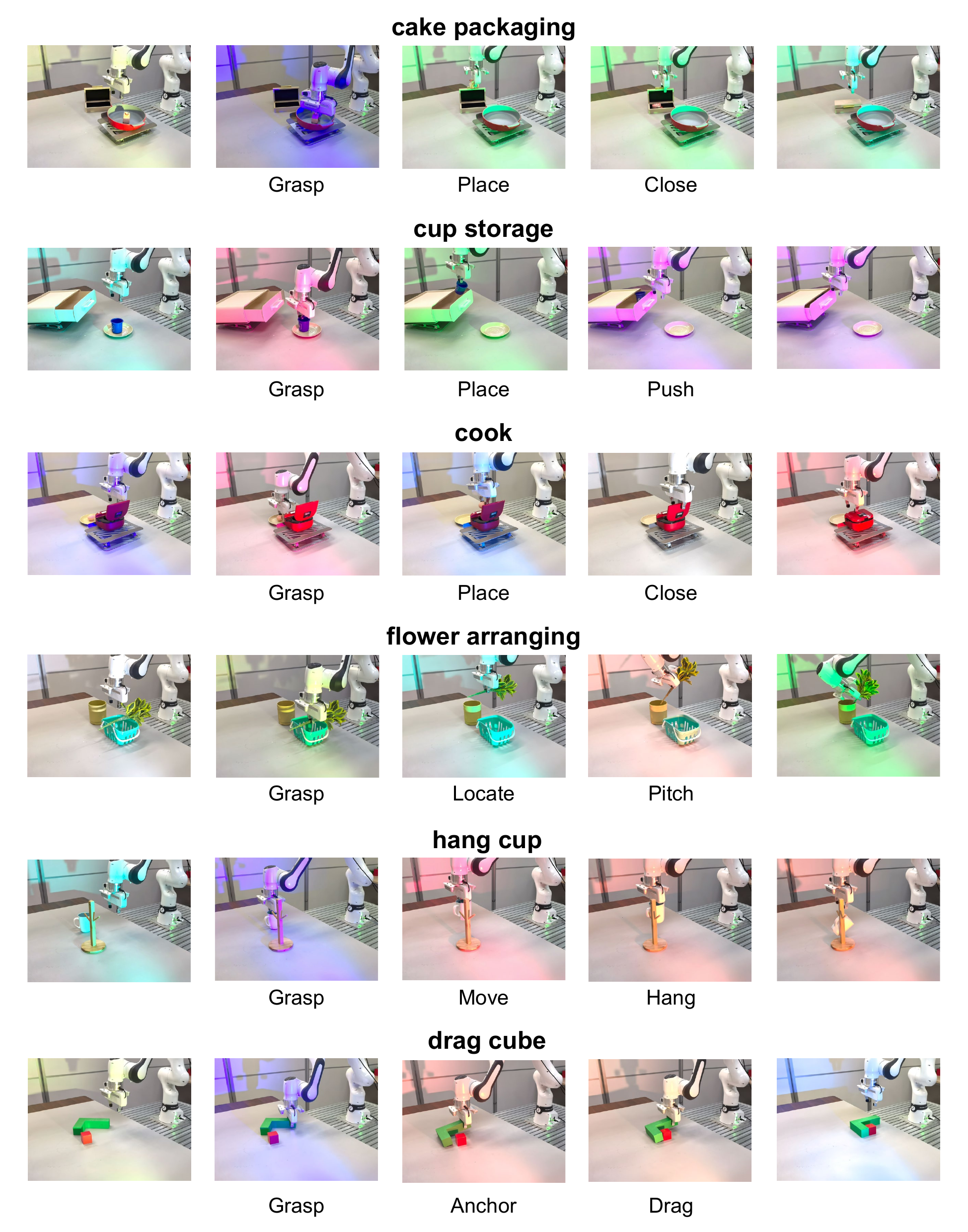}
        \caption{Lighting variation.}
        \label{fig:real_rollouts_light}
    \end{subfigure}
    \captionof{figure}{Real-robot rollouts on the six tasks under normal
    lighting and under varied lighting conditions.
    Each row visualizes the keyframes of one task labeled with the
    active sub-phase.}
    \label{fig:real_rollouts}
\end{figure}

\textbf{Task descriptions.}
We design six long-horizon tasks,
shown in Figure~\ref{fig:task_settings}.
The first four tasks involve object pose changes across episodes,
and the last two have fixed object layouts.
\emph{drag cube} uses a tool to drag a block to a target region.
\emph{hang cup} hangs a cup on a rack.
\emph{flower arranging} inserts a flower into a vase.
\emph{cake packaging} places a cake into a box and closes the
lid.
\emph{cup storage} places a cup into a drawer and closes the
drawer.
\emph{cook} places an egg into a pot and closes the lid.

\textbf{Pose variation.}
Four tasks vary object poses across episodes.
In drag cube the tool is displaced by $\pm5$~cm along $x$ and
rotated by $\pm45$ degrees,
and the cube is displaced by $\pm3$~cm along $y$.
In hang cup either the cup is displaced by about 12~cm along $x$
or the rack is displaced by 5~cm along the diagonal between $-x$
and $-y$.
In flower arranging either the vase is displaced by $\pm8$~cm along
$x$ or the flower and its basket are displaced together by
$\pm5$~cm along $y$.
In cake packaging the box is displaced by 15~cm along $x$ and
rotated by about 90 degrees,
and the plate holding the cake is rotated by about 90 degrees on a
stand inclined by 30 degrees.
Cup storage and cook use fixed layouts,
with the drawer and the cookware placed on stands inclined by 30
degrees,
so three tasks involve non-upright object orientations.

\textbf{Slots and phase names.}
All six tasks contain one grasp-release cycle,
and every demonstration has exactly one open-to-closed gripper
transition,
so the slot count is $K=2$ for every real-robot policy by the rule
$K=2\times\text{max\_cycles}$ of Appendix~\ref{app:slot_warmup}.
The supervision consists only of the four sub-phase classes
$\varphi_t$ derived from the gripper command and the cycle index
$\kappa_t$.
The phase names in Figures~\ref{fig:real_affordance}
and~\ref{fig:real_rollouts} are author annotations that describe the
end-effector state change of the upcoming segment,
and each name maps to the sub-phase whose gripper transition it
describes.
Move before the grasp, push, and close map to $\varphi_t=0$ with the
gripper staying open,
grasp maps to $\varphi_t=1$,
move after the grasp, locate, anchor, and drag map to $\varphi_t=2$
with the gripper staying closed,
and place, hang, and pitch map to $\varphi_t=3$.
The names are therefore neither additional supervision classes nor
fine-grained phase predictions of the model.

\textbf{Rollouts.}
Figure~\ref{fig:real_rollouts} visualizes per-task keyframes for
PAF under normal lighting and under the colored-lighting
variation used to probe robustness.

\subsection{Failure Case Analysis}
\label{app:failure_cases}

\begin{figure}[t]
    \centering
    \includegraphics[width=0.9\linewidth]{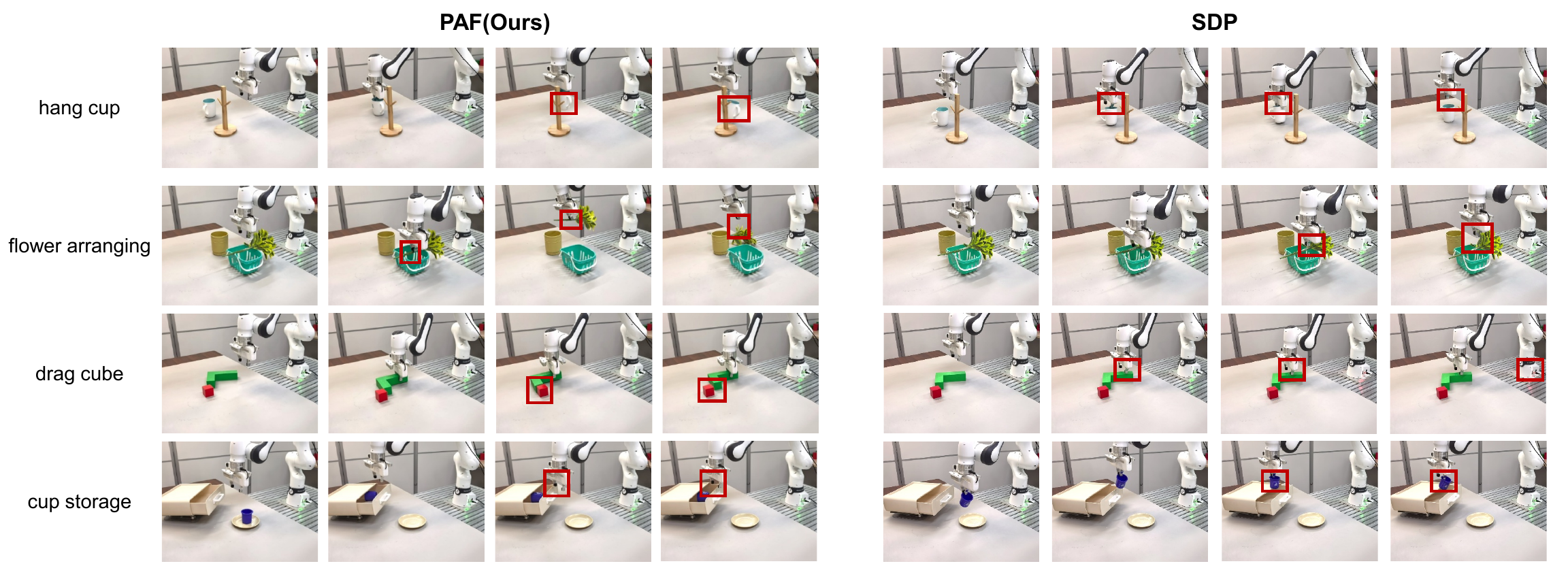}
    \caption{Representative failure cases of PAF and SDP on four
    real-robot tasks.
    Red boxes mark the failure moments.}
    \label{fig:failure}
\end{figure}

Figure~\ref{fig:failure} compares representative failure modes
of PAF and SDP on four tasks.
SDP fails in earlier phases,
missing the cup in \emph{hang cup},
failing to grasp the flower stem in \emph{flower arranging},
colliding with the tool in \emph{drag cube},
and placing the cup inaccurately in \emph{cup storage}.
PAF rarely fails in these earlier phases.
Its failures concentrate in later, contact-rich interactions,
such as the cup loop slipping off the peg in \emph{hang cup},
the stem slipping during rotation in \emph{flower arranging},
the tool failing to hook the block in \emph{drag cube},
and the drawer closing push being misaligned in
\emph{cup storage}.
This indicates that the affordance field provides an effective
geometric inductive bias for spatial localization,
and the remaining headroom lies in fine-grained control during
contact-rich interactions.

\subsection{Inference Overhead}
\label{app:inference_overhead}

Table~\ref{tab:inference_time} reports the per-step inference
time measured on the real-robot deployment configuration.
The affordance field prediction module
adds approximately 100~ms over SDP,
a 13\% overhead relative to the SDP backbone.
The inference bottleneck remains the $SO(3)$-equivariant
diffusion backbone shared by both PAF and SDP,
which accounts for the majority of the 750~ms SDP baseline.
DP3 is faster because it uses a simpler non-equivariant backbone,
at the cost of substantially lower success rates
(Table~\ref{tab:real_robot}).

\begin{center}
\small
\setlength{\tabcolsep}{10pt}
\captionof{table}{Single-step inference time on a single NVIDIA RTX 4090
with DDIM 20-step denoising.}
\label{tab:inference_time}
\begin{tabular}{l c}
\toprule
Method & Inference time (ms) \\
\midrule
DP3        & $\sim$450 \\
SDP        & $\sim$750 \\
PAF (Ours) & $\sim$850 \\
\bottomrule
\end{tabular}
\end{center}

\end{document}